\documentclass{article} 
\usepackage{iclr,times}

\usepackage{amsmath,amsfonts,bm}

\def\eqref#1{equation~\ref{#1}}

\def\1{\bm{1}}

\DeclareMathAlphabet{\mathsfit}{\encodingdefault}{\sfdefault}{m}{sl}
\SetMathAlphabet{\mathsfit}{bold}{\encodingdefault}{\sfdefault}{bx}{n}

\usepackage{hyperref}
\usepackage{url}

\title{MotionSpaceFlow: Representation-Aware Flow Matching in Direct Motion Space}

\author{
    Qing Yu \quad Kent Fujiwara\vspace{.2em}\\
    \hspace{3pt}LY Corporation, Tokyo, Japan \vspace{.2em}\\
    \hspace{3pt}\texttt{\{yu.qing, kent.fujiwara\}@lycorp.co.jp}
}

\usepackage{pifont}
\usepackage{booktabs}
\usepackage{makecell}
\usepackage{multirow}
\usepackage{rotating}
\usepackage{algorithm}
\usepackage{algpseudocode}
\usepackage{capt-of}
\usepackage{url}
\usepackage{mathtools}
\usepackage{amsthm}
\usepackage{subcaption}
\usepackage{colortbl}
\usepackage{hhline}
\usepackage{nicematrix}
\usepackage{enumitem}
\usepackage{placeins}
\usepackage{graphicx}
\usepackage{xspace}
\usepackage{tikz}
\usetikzlibrary{arrows.meta,positioning,fit}

\newcommand{\Tref}[1]{Table~\ref{#1}}
\newcommand{\Eref}[1]{Eq.~(\ref{#1})}
\newcommand{\Fref}[1]{Fig.~\ref{#1}}

\newcommand{\Sref}[1]{Section~\ref{#1}}

\newcommand{\eg}{\textit{e.g.}\xspace}

\newcommand{\cmark}{\ding{51}}%
\newcommand{\xmark}{\ding{55}}%

\newcommand{\x}{\mathbf{x}}
\newcommand{\y}{\mathbf{y}}

\newcommand{\method}{\textsc{MSFlow}\xspace}

\newtheorem{proposition}{Proposition}

\iclrfinalcopy 
\begin{document}

\maketitle

\begin{abstract}
Recent advances in diffusion and flow models have substantially improved text-driven human motion generation. Yet most methods generate in low-dimensional, temporally downsampled latent spaces learned primarily for reconstruction, a bottleneck that can limit generation quality and preclude direct manipulation of individual frames and joints. We introduce MotionSpaceFlow (\method), a representation-aware flow-matching framework that predicts clean motion directly in continuous motion space without a learned encoder or decoder. To account for the anisotropic structure of direct motion representations, we propose representation-aware noise scaling and show how the initial Gaussian source scale governs the covariance of intermediate probability-path marginals. We further introduce a Representation-Aware Multimodal Diffusion Transformer (RA-MMDiT), which jointly updates token-level language and full-resolution motion features through joint attention while adapting temporal information flow to the motion representation: causal attention for incremental features defined by frame-to-frame changes, and bidirectional attention for global features such as absolute joint coordinates. Across different datasets and motion representations, \method achieves state-of-the-art text-to-motion performance. Its global representation variant additionally enables zero-shot, inference-time control over any joint or frame through projection sampling without control-conditioned training, delivering leading motion quality with exact constraint satisfaction.
\end{abstract}

\section{Introduction}
\label{sec:intro}

Synthesizing human motion from natural language requires translating a high-level description into a coherent physical trajectory. Diffusion and flow models~\citep{ho2020denoising,song2021score,rombach2022high,lipman2022flow} address this one-to-many problem by transforming Gaussian noise into complete motion sequences. Most recent methods generate in low-dimensional, temporally downsampled spaces produced by variational autoencoders (VAEs)~\citep{kingma2013auto,chen2023executing,dai2024motionlcm,hong2025salad}, reducing sequence length and generation cost.
Others discretize motion with a VQ-VAE~\citep{van2017neural} and apply transformer-based generative models to the resulting token sequence~\citep{guo2022tm2t,jiang2023motiongpt,guo2024momask}. Although effective, both continuous and discrete latent approaches learn representations primarily through reconstruction. Generation quality can therefore be limited by autoencoder capacity and reconstruction fidelity~\citep{yao2025reconstruction,zheng2026diffusion}, while latent tokens lose direct correspondence to individual frames and joints, complicating fine-grained spatial control.

\begin{figure*}[t]
    \centering
    \includegraphics[width=\textwidth]{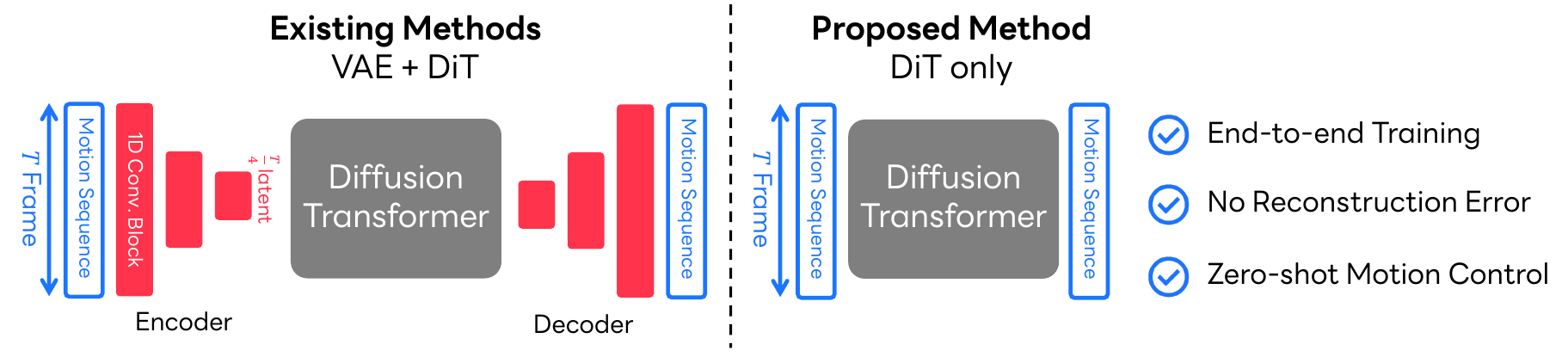}
    \caption{\textbf{Latent-space versus direct motion-space generation.} Existing methods encode a $T$-frame sequence into a shorter latent sequence, apply a diffusion transformer in the latent space, and decode the result back to motion. \method removes the motion encoder and decoder and applies a flow-matching transformer directly to the full sequence. This end-to-end design avoids autoencoder reconstruction error and preserves direct access to frames and joints for zero-shot spatial control.
    }
    \label{fig:overview}
\end{figure*}

As illustrated in~\Fref{fig:overview}, we instead generate full-resolution motion variables with a Diffusion Transformer (DiT)-only model. Removing the motion encoder and decoder eliminates the reconstruction bottleneck and preserves the frame--joint structure required for inference-time spatial control. Direct-space generation, however, is not achieved by simply removing the autoencoder. Coordinate-wise z-normalization does not eliminate correlations across joints, feature types, and frames because motion representations combine continuous 3D quantities (\eg, root-relative joint positions), continuous 6D rotations (\eg, joint orientations), and categorical variables (\eg, foot-contact labels). The flow must therefore traverse an anisotropic data distribution, adapt temporal dependencies to the represented variables, and align a short prompt with full-resolution motion.

To address these challenges, we introduce MotionSpaceFlow (\method), a representation-aware framework for flow matching in continuous motion space. Following JiT~\citep{li2026back}, \method predicts the clean motion endpoint and analytically converts it into a velocity field for deterministic ODE sampling. Representation-aware noise scaling incorporates the Gaussian source scale into the probability path used for both training and sampling, treating it as a parameter of the probability path rather than merely as a control for inference-time randomness. Our analysis shows how this scale controls signal emergence and the anisotropy of intermediate path marginals.

Our Representation-Aware Multimodal DiT (RA-MMDiT) jointly updates token-level language and frame-level motion features to support fine-grained alignment across the full sequence. Crucially, its temporal attention graph is configured to match the underlying representation. The standard 263-dimensional HumanML3D representation~\citep{guo2022generating} includes root velocities whose accumulation recovers the global trajectory of the whole sequence. RA-MMDiT uses causal attention for these incremental features: all frames are updated in parallel at each ODE step, while each motion token attends only to its prefix and the complete text condition. For globally coupled absolute XYZ coordinates of all joints, it instead uses bidirectional attention, providing two-sided context and propagating spatial constraints before and after a controlled keyframe. This representation-aware attention lets RA-MMDiT handle both incremental and absolute motion representations.

On HumanML3D, the causal 263D variant achieves state-of-the-art text-to-motion performance, and our results reveal a clear contrast between the optimal attention designs for incremental 263D and absolute XYZ representations.
The XYZ variant further enables any-joint, any-frame projection sampling without control-conditioned training, delivering leading motion quality and exact constraint satisfaction under the OmniControl~\citep{xie2024omnicontrol} protocol. We also investigate the broader applicability of \method on SnapMoGen dataset~\citep{guo2025snapmogen}. Our main contributions are:

\begin{samepage}
    \begin{itemize}[leftmargin=*,nosep]
        \item \textbf{Direct motion-space flow matching.} We introduce \method, which generates continuous motion through clean-endpoint prediction and ODE sampling without a learned motion encoder or decoder, and achieves state-of-the-art text-to-motion results.
        \item \textbf{Representation-aware noise scaling.} We characterize how Gaussian source scale controls signal emergence and the covariance conditioning of intermediate direct-space path marginals.
        \item \textbf{Representation-aware temporal modeling.}  We introduce RA-MMDiT, which jointly updates token-level language and full-resolution motion features, and show that causal attention favors incremental representations whereas bidirectional attention is crucial for absolute coordinates.
        \item \textbf{Training-free spatial control.} We derive projection sampling for any-joint, any-frame constraints using a model trained only for text-to-motion generation, with exact constraint satisfaction.
    \end{itemize}
\end{samepage}

\section{Related Work}
\label{sec:related}

\subsection{Motion Representations and Temporal Modeling for Motion Generation}
\enlargethispage{\baselineskip}
Early neural approaches generate motion with recurrent or convolutional architectures~\citep{yan2019convolutional,zhao2020bayesian,guo2020action2motion,ghosh2021synthesis}. Recent text-conditioned models operate on discrete tokens~\citep{guo2022tm2t,zhang2023t2m,guo2024momask,pinyoanuntapong2024mmm,zou2024parco}, continuous motion features~\citep{mdm2022human,zhang2022motiondiffuse,dabral2023mofusion}, or autoencoded latent sequences~\citep{petrovich2022temos,chen2023executing,dai2024motionlcm,hong2025salad}. Discrete and autoencoded representations shorten the modeled sequence, but temporal downsampling limits information capacity and decoded fidelity remains bounded by autoencoder reconstruction quality. \method instead learns transport in continuous motion space, preserving frame- and joint-level access.

Autoregressive models enforce temporal causality by predicting future tokens or segments from past context. Discrete autoregressive methods~\citep{zhang2023t2m,jiang2023motiongpt} can suffer from exposure bias~\citep{schmidt2019generalization}, whereas diffusion-based variants~\citep{meng2024rethinking,zhao2024dartcontrol,xiao2025motionstreamer,yu2026causal} combine iterative refinement with causal backbones. Bidirectional denoisers instead use complete temporal context. We compare matched causal and bidirectional variants to test whether the preferred dependency graph depends on the representation.

\subsection{Flow Matching and Spatial Control}
Flow matching~\citep{albergo2022building,albergo2023stochastic,lipman2022flow} learn vector fields that transport a simple source distribution to data, and scalable interpolant and Diffusion Transformers~\citep{peebles2023scalable,ma2024sit} provide effective parameterizations of such fields. We adopt clean-endpoint prediction from JiT~\citep{li2026back}, which allows the method to operate effectively in high-dimensional spaces, and study its interaction with Gaussian source scale and temporal attention in anisotropic motion representations. Some prior work selects or optimizes the initial noise of a fixed pretrained diffusion process to control generation~\citep{mao2024lottery,wang2024silent,zhou2024golden,harrington2026noisediv,ota2026winro}.

In motion generation, prior methods impose spatial constraints or edits through control-conditioned training, guidance, test-time optimization, or large-scale controllable modeling~\citep{karunratanakul2023guided,athanasiou2024motionfix,huang2024controllable,petrovich2024multi,xie2024omnicontrol,liu2024programmable,rempe2026kimodo}. ProjFlow~\citep{watanabe2026projflow} instead projects clean endpoint estimates during sampling without control-conditioned training. \method applies endpoint projection directly to XYZ motion, enabling arbitrary frame--joint--axis constraints.

\subsection{Motion--Language Conditioning}
Unlike motion--language alignment models~\citep{tevet2022motionclip,petrovich2023tmr,yu2024exploring,fujiwara2024chronologically,yu2025remogpt}, which jointly update motion and text features through contrastive learning, text-to-motion generators commonly condition on a frozen language encoder. In direct motion-space generation, preserving textual semantics is significantly more challenging because no temporally downsampled latent representation is available to bridge the modality gap. \method therefore refines token-level DistilBERT features~\citep{sanh2019distilbert} as a function of flow time and couples them with motion tokens through multimodal attention, without an auxiliary motion--language alignment objective.

\section{Method}
\label{sec:method}
\method generates motion directly in the data representation, without a learned motion encoder or decoder. \Fref{fig:framework} shows that the method consists of two components. First, a clean motion sequence, represented using either incremental 263D features or global XYZ coordinates, is interpolated with a scaled Gaussian source to form a noised motion state. Second, a Representation-Aware Multimodal Diffusion Transformer (RA-MMDiT) combines this state with token-level text features to predict the clean motion endpoint. The representation informs both the source scale and the temporal attention graph: causal for incremental features and bidirectional for absolute coordinates. The predicted endpoint is converted into an ODE velocity for the training objective and inference-time sampling. For XYZ motion, the endpoint can also be projected to satisfy the spatial constraints during inference.

\begin{figure*}[t]
    \centering
    \includegraphics[width=\textwidth]{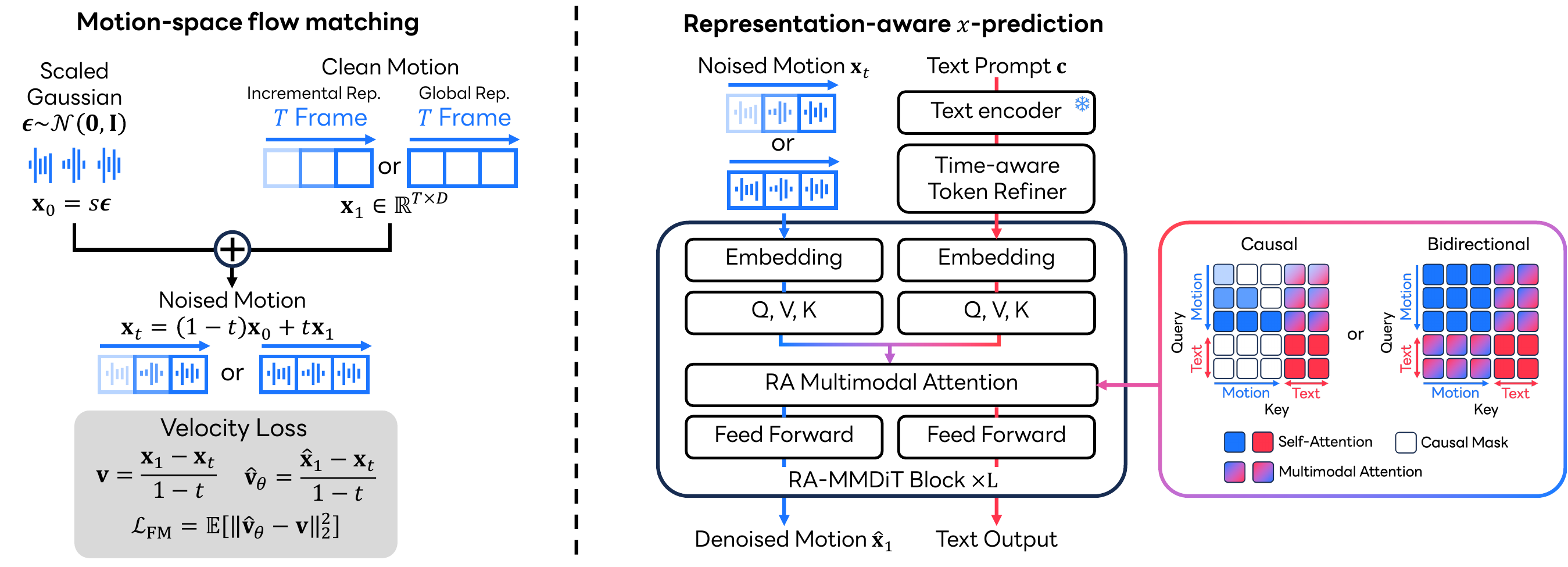}
    \caption{\textbf{Framework of \method.} \textbf{Left:} motion-space flow matching interpolates a scaled Gaussian source $\x_0=s\boldsymbol\epsilon$ with a clean $T$-frame motion $\x_1$ in either an incremental or global representation. \textbf{Right:} RA-MMDiT jointly processes the noised motion $\x_t$ and time-refined text tokens to predict the clean endpoint $\hat\x_1$. Its attention mask follows the motion representation: a motion query attends only to its prefix and all text tokens in the causal variant, whereas the bidirectional variant uses the complete motion--text context.}
    \label{fig:framework}
\end{figure*}

\subsection{Problem Setting and Direct Motion Representations}
\label{sec:direct-space}

Let $\x_1\in\mathbb R^{T\times D}$ be a normalized motion sequence, where $T$ is the number of frames and $D$ is the feature dimension, and let $\mathbf c$ denote a text prompt, such as a natural language description of the motion. We model the sequence as a continuous tensor at its original temporal resolution. We consider two commonly used representations with different temporal semantics:
\begin{itemize}[leftmargin=*,nosep]
    \item \textbf{Incremental 263D motion.} The standard HumanML3D representation~\citep{guo2022generating} contains root yaw and planar velocities, root height, root-relative joint positions, 6D joint rotations, local joint velocities, and foot-contact channels. In particular, the global root trajectory is obtained by accumulating frame-to-frame velocities.
    \item \textbf{Global XYZ motion.} Each frame contains the absolute 3D coordinates of 22 joints, resulting in $D=66$. These variables are globally coupled across time but expose every frame and joint directly, which makes them suitable for spatial control.
\end{itemize}

In both cases, the flow acts on $\x_1$ itself; there is no learned map between a temporally compressed latent sequence and motion. The following bound isolates the reconstruction constraint that this design removes. Let $P_1$ be the data distribution over vectorized motions in $\mathbb R^{TD}$, and let $g:\mathcal Z\rightarrow\mathbb R^{TD}$ be a fixed decoder with range $\mathcal R_g=\{g(\mathbf z):\mathbf z\in\mathcal Z\}$.

\begin{proposition}[Autoencoder fidelity floor]
\label{prop:decoder-floor}
For any generated distribution $Q$ supported on $\mathcal R_g$,
\begin{equation}
    W_2^2(P_1,Q)
    \geq
    \mathbb E_{\mathbf X\sim P_1}
    \left[\operatorname{dist}^2(\mathbf X,\mathcal R_g)\right],
    \label{eq:decoder-floor}
\end{equation}
\end{proposition}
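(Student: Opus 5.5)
The plan is to work directly from the coupling definition of the Wasserstein distance,
\[
W_2^2(P_1,Q)=\inf_{\pi\in\Pi(P_1,Q)}\mathbb E_{(\mathbf X,\mathbf Y)\sim\pi}\|\mathbf X-\mathbf Y\|^2 ,
\]
where $\Pi(P_1,Q)$ is the set of couplings with marginals $P_1$ and $Q$. I will bound the transport cost of every coupling from below by the right-hand side of \eqref{eq:decoder-floor}. Taking the infimum over couplings then gives the claim.

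Fix an arbitrary coupling $\pi$. Because $Q$ is supported on $\mathcal R_g$, its second marginal gives $\mathbf Y\in\mathcal R_g$ for $\pi$-almost every pair. By definition of the point-to-set distance, for every $\mathbf y\in\mathcal R_g$ we have $\|\mathbf x-\mathbf y\|\ge\operatorname{dist}(\mathbf x,\mathcal R_g)=\inf_{\mathbf y'\in\mathcal R_g}\|\mathbf x-\mathbf y'\|$. Hence $\|\mathbf X-\mathbf Y\|^2\ge\operatorname{dist}^2(\mathbf X,\mathcal R_g)$ holds $\pi$-almost surely. Taking expectations under $\pi$, the right-hand side depends only on $\mathbf X$, whose law is $P_1$. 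So the expected cost is at least $\mathbb E_{\mathbf X\sim P_1}[\operatorname{dist}^2(\mathbf X,\mathcal R_g)]$, uniformly in $\pi$.

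The only delicate step is a measure-theoretic one: \emph{supported on $\mathcal R_g$} must be read so that the pointwise inequality holds almost surely, and the integrand $\operatorname{dist}^2(\cdot,\mathcal R_g)$ must be measurable.

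\begin{itemize}
\item \textbf{Measurability.} I will use that $\mathbf x\mapsto\operatorname{dist}(\mathbf x,\mathcal R_g)$ is $1$-Lipschitz, hence continuous and Borel.
\item \textbf{Support.} I will interpret the support condition as $Q(\overline{\mathcal R_g})=1$, where $\overline{\mathcal R_g}$ is the closure of $\mathcal R_g$. Since $\operatorname{dist}(\cdot,\overline{\mathcal R_g})=\operatorname{dist}(\cdot,\mathcal R_g)$, replacing $\mathcal R_g$ by its closure changes nothing. This also sidesteps any question of whether the range $g(\mathcal Z)$ itself is measurable.
\end{itemize}

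If $W_2^2(P_1,Q)=\infty$, the inequality is trivial, so no moment assumptions are needed.
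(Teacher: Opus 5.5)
Your proposal is correct and follows the paper's own proof: fix an arbitrary coupling, use $\mathbf Y\in\mathcal R_g$ almost surely to get the pointwise bound $\|\mathbf X-\mathbf Y\|_2^2\geq\operatorname{dist}^2(\mathbf X,\mathcal R_g)$, take expectations under the coupling, and then take the infimum over couplings. Your remarks that $\operatorname{dist}(\cdot,\mathcal R_g)$ is measurable and that the support condition can be read as $Q(\overline{\mathcal R_g})=1$ are sound refinements that the paper leaves implicit.
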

where $W_2^2$ denotes the squared 2-Wasserstein distance. Every decoded sample must lie in the decoder range, regardless of the latent generator's expressiveness. Because the lower bound can be zero when that range contains the data support, it does not claim that direct generation is universally better. Rather, it shows that direct motion-space flow eliminates this particular decoder-induced fidelity floor while retaining ordinary generative error.

\subsection{Clean-Motion Prediction and Flow Sampling}
\label{sec:x-prediction}

The left side of~\Fref{fig:framework} defines the probability path. During training, we sample one time $t\sim\mathcal U(0,1)$ per sequence and one Gaussian tensor $\boldsymbol\epsilon$ of the same shape as the motion, then form
\begin{equation}
    \x_0=s\boldsymbol\epsilon,
    \qquad
    \boldsymbol\epsilon\sim\mathcal N(\mathbf0,\mathbf I),
    \qquad
    \x_t=(1-t)\x_0+t\x_1,
    \label{eq:linear-path}
\end{equation}
where $s>0$ is the source scale. The same $t$ is used for all valid frames so that $\x_t$ remains a coherent sequence. Rather than predicting velocity directly, RA-MMDiT predicts the clean endpoint,
\begin{equation}
    \hat\x_1=f_\theta(\x_t,t,\mathbf c).
    \label{eq:x-prediction}
\end{equation}
This $x$-prediction parameterization follows JiT~\citep{li2026back} and provides the network a target with the same physical meaning at every noise level, where $\hat\x_1$ always retains the semantics and units of the original motion representation. For the linear path, the exact sample-wise velocity and its endpoint-based estimate are calculated as follows:
\begin{equation}
    \mathbf v=\x_1-\x_0=\frac{\x_1-\x_t}{1-t},
    \qquad
    \hat{\mathbf v}_\theta=\frac{\hat\x_1-\x_t}{1-t}.
    \label{eq:jit-velocity}
\end{equation}
Thus, clean-motion prediction remains compatible with the flow ODE $d\x_t/dt=\hat{\mathbf v}_\theta$. The denominator in~\Eref{eq:jit-velocity} becomes numerically unstable near $t=1$. We therefore clip it as follows:
\begin{equation}
    d_t=\max(1-t,\sigma_{\min}),\qquad
    \hat{\mathbf v}_\theta=\frac{\hat\x_1-\x_t}{d_t},\qquad
    \mathbf v^*=\frac{\x_1-\x_t}{d_t},
    \label{eq:x-to-v}
\end{equation}
and optimize the valid-frame residual loss as follows:
\begin{equation}
    \mathcal L_{\mathrm{FM}}
    =\mathbb E\left[
    \frac{1}{|\Omega|}\sum_{i\in\Omega}
    \left\|\hat{\mathbf v}_{\theta,i}-\mathbf v_i^*\right\|_2^2
    \right],
    \label{eq:jit-loss}
\end{equation}
where $\Omega$ is the set of non-padding frames. For $t\leq1-\sigma_{\min}$, this is exactly the linear-path velocity objective, equivalently weighting endpoint error by $(1-t)^{-2}$. The terminal interval uses a bounded surrogate. At inference, we initialize the entire sequence from $\mathcal N(\mathbf0,s^2\mathbf I)$ and integrate all frames jointly with a fixed-step Heun solver~\citep{Heun1900} and a final Euler update.

\subsection{Representation-Aware Noise Scaling}
\label{sec:noise-scaling}

Following standard practice~\citep{guo2022generating,meng2025absolute}, we normalize every motion channel before training. However, normalization alone does not make the motion distribution spherical. It rescales individual channels without removing correlations across joints, feature types, or frames. The resulting 263D and XYZ distributions can therefore have much more variation in some directions than in others, whereas the Gaussian source varies equally in every direction. We verify this structure empirically in Appendix~\ref{app:motion-correlation}.

The source scale $s$ in~\Eref{eq:linear-path} is the standard deviation of the initial Gaussian noise. It is fixed during both training and sampling and should not be confused with an inference-time temperature. Increasing $s$ has two intuitive effects: stronger noise causes the motion signal to appear later along the path, while its isotropic covariance makes intermediate states better conditioned.

To formalize these effects, let $\mathbf X_1$ be a vectorized clean motion for a fixed-length sequence. Assume $\mathbb E[\mathbf X_1]=\mathbf0$ and denote its covariance by $\boldsymbol\Sigma$, with smallest and largest eigenvalues $\lambda_{\min}$ and $\lambda_{\max}$.

\begin{proposition}[Effects of source scale on the linear flow path]
\label{prop:source-geometry}
Let $\mathbf X_0\sim\mathcal N(\mathbf0,s^2\mathbf I)$ be independent of $\mathbf X_1$, and let
\begin{equation}
    \mathbf X_t=t\mathbf X_1+(1-t)\mathbf X_0.
    \label{eq:source-geometry-path}
\end{equation}
Then $s$ controls when the motion signal emerges and how well conditioned the intermediate distribution is:
\begin{enumerate}[leftmargin=*,label=(\roman*),nosep]
    \item For any unit eigenvector $\mathbf u$ of $\boldsymbol\Sigma$ with eigenvalue $\lambda_{\mathbf u}$,
    \begin{equation}
        \operatorname{SNR}_{\mathbf u}(t;s)
        =\frac{t^2\lambda_{\mathbf u}}{(1-t)^2s^2},
        \qquad
        t_{\mathbf u}^*=\frac{s}{s+\sqrt{\lambda_{\mathbf u}}},
        \label{eq:path-snr}
    \end{equation}
    where $t_{\mathbf u}^*$ is the time at which the signal and noise variances are equal. A larger $s$ moves this time closer to the clean endpoint $t=1$, so the motion signal emerges later.
    \item The intermediate covariance and its condition number for $t<1$ are
    \begin{align}
        \operatorname{Cov}(\mathbf X_t)
        &=t^2\boldsymbol\Sigma+(1-t)^2s^2\mathbf I,
        \label{eq:path-covariance}\\
        \kappa_t(s)
        &=\frac{t^2\lambda_{\max}+(1-t)^2s^2}
        {t^2\lambda_{\min}+(1-t)^2s^2}.
        \label{eq:path-condition}
    \end{align}
    The condition number $\kappa_t(s)$ is non-increasing in $s$. Thus, a larger isotropic source scale improves conditioning by reducing the relative disparity between high- and low-variance directions, while leaving their absolute eigenvalue gap unchanged.
\end{enumerate}
\end{proposition}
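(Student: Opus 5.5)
The argument starts from the second-order structure of the path in \Eref{eq:source-geometry-path}, since both claims follow from one covariance computation. Because $\mathbf X_0$ and $\mathbf X_1$ are independent and both have mean zero, $\mathbb E[\mathbf X_t]=\mathbf0$. The cross term in the covariance therefore vanishes:
\begin{equation*}
\operatorname{Cov}(\mathbf X_t)=t^2\boldsymbol\Sigma+(1-t)^2\operatorname{Cov}(\mathbf X_0)=t^2\boldsymbol\Sigma+(1-t)^2s^2\mathbf I .
\end{equation*}
This is \Eref{eq:path-covariance}. The isotropic term commutes with $\boldsymbol\Sigma$, so both matrices are diagonalized in the same eigenbasis. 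Each eigenvalue $\lambda_{\mathbf u}$ of $\boldsymbol\Sigma$ becomes $t^2\lambda_{\mathbf u}+(1-t)^2s^2$.

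For part (i), we project onto a unit eigenvector $\mathbf u$. Along $\mathbf u$, the signal component $t\,\mathbf u^\top\mathbf X_1$ has variance $t^2\mathbf u^\top\boldsymbol\Sigma\mathbf u=t^2\lambda_{\mathbf u}$. The noise component $(1-t)\mathbf u^\top\mathbf X_0$ has variance $(1-t)^2s^2\|\mathbf u\|^2=(1-t)^2s^2$. Their ratio is the stated $\operatorname{SNR}_{\mathbf u}$.

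To find $t_{\mathbf u}^*$, we set the SNR equal to one. For $t\in(0,1)$ both sides are positive, so we may take square roots: $t\sqrt{\lambda_{\mathbf u}}=(1-t)s$, which gives $t^*_{\mathbf u}=s/(s+\sqrt{\lambda_{\mathbf u}})$. The SNR is strictly increasing in $t$, so this crossing is unique. We have $\partial_s t^*_{\mathbf u}=\sqrt{\lambda_{\mathbf u}}/(s+\sqrt{\lambda_{\mathbf u}})^2\ge0$, and $t^*_{\mathbf u}\to1$ as $s\to\infty$. This justifies the statement that the signal emerges later. The degenerate case $\lambda_{\mathbf u}=0$ should be noted separately: the SNR is then zero and $t^*_{\mathbf u}=1$.

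For part (ii), the eigenvalues $t^2\lambda+(1-t)^2s^2$ are increasing in $\lambda$. The largest and smallest eigenvalues of $\operatorname{Cov}(\mathbf X_t)$ therefore correspond to $\lambda_{\max}$ and $\lambda_{\min}$, which gives \Eref{eq:path-condition}. For $t<1$ the denominator is positive whenever $s>0$, so $\kappa_t(s)$ is well defined even if $\lambda_{\min}=0$.

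For monotonicity, write $a=t^2\lambda_{\max}$, $b=t^2\lambda_{\min}$ and $c=(1-t)^2s^2$, so that $\kappa=(a+c)/(b+c)$. Then $\partial_c\kappa=(b-a)/(b+c)^2\le0$, and $c$ is increasing in $s$ for $t<1$. The chain rule then shows that $\kappa_t(s)$ is non-increasing in $s$. Finally, the absolute gap between the extreme eigenvalues is $(a+c)-(b+c)=t^2(\lambda_{\max}-\lambda_{\min})$, which does not depend on $s$.

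I do not expect a real obstacle in this proof. The step needing the most care is making the extreme-eigenvalue identification rigorous, which rests on the shared eigenbasis. A smaller point is handling the boundary cases: $t=0$, where $\kappa=1$, and $\lambda_{\min}=0$.
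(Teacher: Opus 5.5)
Your proposal is correct and follows essentially the same route as the paper's proof. Both project onto an eigenvector of $\boldsymbol\Sigma$ to get the SNR and the crossing time $t^*_{\mathbf u}$, use independence for $\operatorname{Cov}(\mathbf X_t)$, and differentiate $\kappa_t$ with respect to the isotropic term $(1-t)^2s^2$; your shared-eigenbasis argument, explicit $\partial_s t^*_{\mathbf u}$, proof that the gap $t^2(\lambda_{\max}-\lambda_{\min})$ does not depend on $s$, and treatment of the degenerate cases fill in points the paper only asserts.
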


Our ablations show that $s=5$ works well for both the 263D and XYZ representations. We therefore use this value in our primary models.

\subsection{Representation-Aware Multimodal Transformer}
\label{sec:architecture}

The right side of~\Fref{fig:framework} shows RA-MMDiT. Each motion frame is projected to a 512-dimensional token. A frozen DistilBERT encoder~\citep{sanh2019distilbert} produces 768-dimensional token-level language features. We use a two-layer, time-aware text module, which we call the Token Refiner, to project these features to the model width and adapt them to the current flow time. This makes the otherwise static language features responsive to the changing motion signal along the flow path. Rotary position embeddings~\citep{su2024roformer} encode motion order, adaptive layer normalization~\citep{peebles2023scalable} injects flow time, and the text condition is independently replaced by a null embedding with probability $0.1$ to train classifier-free guidance.

The network contains eight RA-MMDiT blocks with four attention heads~\citep{vaswani2017attention}. Each block has modality-specific query, key, value, and feed-forward projections, while joint attention concatenates motion and text keys and values. This allows every motion frame to retrieve relevant words without first compressing the prompt to a single vector. The attention mask is the only architectural difference between the two representation variants:
\begin{itemize}[leftmargin=*,nosep]
    \item In the \textbf{causal} variant, motion query $i$ attends to motion frames $1{:}i$ and all valid text tokens. Text queries attend only to text tokens, preventing future motion from reaching an earlier frame indirectly through the updated language stream.
    \item In the \textbf{bidirectional} variant, motion and text queries attend to the complete valid motion--text sequence. Padding tokens are masked in both variants.
\end{itemize}

The mask reflects each representation's structure. Latent models can handle the directionality of incremental 263D features implicitly through their encoder and decoder, whereas direct motion-space generation must model it explicitly. Because 263D root translation and yaw are integrated from frame-to-frame velocities, we use causal masking to encourage forward-consistent dependencies and limit reliance on bidirectional smoothing.

Absolute XYZ trajectories are globally coupled, so bidirectional context can propagate temporal and skeletal constraints. Causal attention still updates all frames jointly at each ODE step, but prevents future perturbations from affecting earlier frames. These filtering and smoothing properties motivate the causal 263D and bidirectional XYZ variants evaluated in~\Sref{sec:attention-study}; formal analyses appear in Appendix~\ref{app:attention-representation}. At sampling time, classifier-free guidance~\citep{ho2022classifier} is applied to the predicted endpoints and the result is converted to a velocity using~\Eref{eq:x-to-v}. Because the model predicts the target motion variables directly, the integrated endpoint requires no motion decoder.

\subsection{Inference-Time Any-Joint, Any-Frame Control}
\label{sec:zero-shot-control}

Direct XYZ endpoints enable zero-shot spatial control at inference. The bidirectional XYZ model is trained on paired motions and text, without constraint masks or target coordinates. At inference, let $\mathbf M\in\{0,1\}^{T\times66}$ select any valid frame--joint--axis entries and $\y$ contain their targets in normalized XYZ coordinates. At each solver step, we first project the predicted clean endpoint:
\begin{equation}
    \hat\x_1^{\mathrm{proj}}
    =(\mathbf1-\mathbf M)\odot\hat\x_1
    +\mathbf M\odot\y.
    \label{eq:control-projection}
\end{equation}

Simply overwriting the same entries of the noised state $\x_t$ with clean coordinates would mix two noise levels. Instead, we write the linear path as $\x_t=\alpha_t\x_1+\sigma_t\x_0$, where $\alpha_t=t$ and $\sigma_t=1-t$, and recover the current source estimate $\hat\x_0=(\x_t-\alpha_t\hat\x_1)/\sigma_t$ for $t<1$. For the next solver time $t'$, the projection sampler refreshes that estimate with a new source-scale noise $\epsilon_s$ and recomposes a path-consistent state:
\begin{align}
    \eta_{t'}&=\operatorname{clip}\!\left(
    \frac{1-\sigma_{t'}}{s^2},0,1\right),\\
    \widetilde\x_0&=\sqrt{1-\eta_{t'}}\,\hat\x_0
    +\sqrt{\eta_{t'}}\,\boldsymbol\epsilon_s,
    \quad \boldsymbol\epsilon_s\sim\mathcal N(\mathbf0,s^2\mathbf I),
    \label{eq:control-refresh}\\
    \x_{t'}&=\alpha_{t'}\hat\x_1^{\mathrm{proj}}
    +\sigma_{t'}\widetilde\x_0.
    \label{eq:control-recompose}
\end{align}
The injected noise uses the source scale of the training path, and a final hard projection removes residual numerical error at the endpoint. Following \mbox{ProjFlow}~\citep{watanabe2026projflow}, our projection sampler also adopts its kinematics-aware metric and pseudo-observation construction. We refer readers to that paper for details. Neither $\mathbf M$ nor $\y$ appears in the training objective, so ``training-free control'' refers specifically to introducing spatial constraints only during inference.

\section{Experiments}
\label{sec:experiments}

\subsection{Experimental Setup}

\paragraph{Dataset and representations.}
We conduct experiments on HumanML3D~\citep{guo2022generating}, using sequences of at most 192 frames at 20 frames per second. The primary model generates normalized 263-dimensional HumanML3D features. The XYZ variant generates shared-axis-normalized positions of 22 joints, flattened to 66 values per frame. Following previous methods~\citep{meng2024rethinking,meng2025absolute,yu2026causal}, both variants are evaluated with the standard 67-dimensional root and rotation-invariant joint-position evaluator introduced in~\citep{meng2024rethinking}. XYZ outputs are converted to this space through the HumanML3D post-processing pipeline. To test broader applicability, we additionally train \method on the 272D HumanML3D representation used by MotionStreamer~\citep{xiao2025motionstreamer} and on the SnapMoGen dataset using its native 296D representation~\citep{guo2025snapmogen}. Full protocols and results appear in Appendix~\ref{app:272d-results} and~\ref{app:snap-results}, respectively.

\paragraph{Evaluation metrics.}
Following prior works, we report motion-embedding Fr\'{e}chet distance (FID), R-Precision at Top 1--3, matching distance (MM-Dist), multimodality (MModality), and motion--language cosine similarity (CLIP score). Each generation result averages 10 stochastic evaluations.

\paragraph{Implementation details.}
The 68M-parameter transformer uses eight RA-MMDiT blocks, source scale $s=5$, and 50-step flow sampling. Full training and inference details appear in Appendix~\ref{app:repro}.

\subsection{Quantitative Text-to-Motion Results}
\label{sec:main-results}

\begin{table}[t]
\centering
\caption{\textbf{Results of text-to-motion performance on HumanML3D.} We use the 67D evaluator following MARDM. \method (263D) denotes the incremental-263D model, while \method (XYZ) denotes the global-XYZ variant. The average is reported over 10 runs with 95\% confidence intervals. \textbf{Bold} indicates the best result, and \underline{underline} denotes the second-best result.}
\label{tab:main-results}
\setlength{\tabcolsep}{2.7pt}
\resizebox{\linewidth}{!}{
\begin{tabular}{l|c|ccc|cccc}
\toprule
\multirow{2}{*}{Method} & \multirow{2}{*}{Representation} & \multicolumn{3}{c|}{R-Precision$\uparrow$} & \multirow{2}{*}{FID$\downarrow$} & \multirow{2}{*}{MM-Dist$\downarrow$} & \multirow{2}{*}{MModality$\uparrow$} & \multirow{2}{*}{CLIP$\uparrow$} \\
\cline{3-5}
~ & ~ & Top 1 & Top 2 & Top 3 & & & & \\
\midrule
T2M-GPT~\citep{zhang2023t2m} & \multirow{3}{*}{\shortstack{Discrete\\latent}} & $.470^{\pm.003}$ & $.659^{\pm.002}$ & $.758^{\pm.002}$ & $.335^{\pm.003}$ & $3.505^{\pm.017}$ & $2.018^{\pm.053}$ & $.607^{\pm.005}$ \\
MMM~\citep{pinyoanuntapong2024mmm} & & $.487^{\pm.003}$ & $.683^{\pm.002}$ & $.782^{\pm.002}$ & $.132^{\pm.004}$ & $3.359^{\pm.019}$ & $2.241^{\pm.073}$ & $.635^{\pm.003}$ \\
MoMask~\citep{guo2024momask} & & $.490^{\pm.004}$ & $.687^{\pm.003}$ & $.786^{\pm.003}$ & $.116^{\pm.006}$ & $3.353^{\pm.010}$ & $1.263^{\pm.079}$ & $.637^{\pm.003}$ \\
\midrule
MLD~\citep{chen2023executing} & \multirow{5}{*}{\shortstack{Continuous\\latent}} & $.461^{\pm.004}$ & $.651^{\pm.004}$ & $.750^{\pm.003}$ & $.431^{\pm.014}$ & $3.445^{\pm.019}$ & $\underline{3.506^{\pm.031}}$ & $.615^{\pm.003}$ \\
SALAD~\citep{hong2025salad} & & $.552^{\pm.003}$ & $.748^{\pm.003}$ & $.839^{\pm.002}$ & $.124^{\pm.005}$ & $2.990^{\pm.010}$ & $1.833^{\pm.081}$ & $.671^{\pm.001}$ \\
MARDM~\citep{meng2024rethinking} & & $.500^{\pm.004}$ & $.695^{\pm.003}$ & $.795^{\pm.003}$ & $.114^{\pm.007}$ & $3.270^{\pm.009}$ & $2.231^{\pm.071}$ & $.642^{\pm.002}$ \\
ACMDM~\citep{meng2025absolute} & & $.522^{\pm.002}$ & $.713^{\pm.002}$ & $.807^{\pm.002}$ & $.058^{\pm.004}$ & $3.205^{\pm.008}$ & $2.077^{\pm.083}$ & $.652^{\pm.001}$ \\
CMDM~\citep{yu2026causal} & & $.563^{\pm.004}$ & $\underline{.759^{\pm.003}}$ & $\underline{.849^{\pm.002}}$ & $.078^{\pm.003}$ & $2.920^{\pm.007}$ & $1.827^{\pm.094}$ & $\underline{.685^{\pm.001}}$ \\
\midrule
MDM~\citep{mdm2022human} & & $.440^{\pm.007}$ & $.636^{\pm.006}$ & $.742^{\pm.004}$ & $.518^{\pm.032}$ & $3.640^{\pm.028}$ & $\mathbf{3.604^{\pm.031}}$ & $.578^{\pm.003}$ \\
MotionDiffuse~\citep{zhang2022motiondiffuse} & & $.450^{\pm.006}$ & $.641^{\pm.005}$ & $.753^{\pm.005}$ & $.778^{\pm.035}$ & $3.490^{\pm.023}$ & $3.179^{\pm.046}$ & $.606^{\pm.004}$ \\
ReMoDiffuse~\citep{zhang2023remodiffuse} & & $.468^{\pm.003}$ & $.653^{\pm.003}$ & $.754^{\pm.005}$ & $.883^{\pm.021}$ & $3.414^{\pm.020}$ & $2.703^{\pm.154}$ & $.621^{\pm.003}$ \\
\rowcolor[gray]{0.90}\textbf{\method (263D)} & & $\mathbf{.571^{\pm.005}}$ & $\mathbf{.764^{\pm.003}}$ & $\mathbf{.853^{\pm.003}}$ & $\underline{.046^{\pm.004}}$ & $\mathbf{2.890^{\pm.010}}$ & $1.450^{\pm.079}$ & $\mathbf{.686^{\pm.000}}$ \\
\rowcolor[gray]{0.90}\textbf{\method (XYZ)} & \multirow{-5}{*}{\shortstack{Raw\\motion}} & $\underline{.566^{\pm.004}}$ & $\underline{.759^{\pm.003}}$ & $\underline{.849^{\pm.003}}$ & $\mathbf{.038^{\pm.004}}$ & $\underline{2.916^{\pm.008}}$ & $1.518^{\pm.084}$ & $.676^{\pm.001}$ \\
\bottomrule
\end{tabular}}
\end{table}

\Tref{tab:main-results} compares the incremental 263D and global XYZ variants of \method with existing methods using the same 67-dimensional evaluator. The global variant achieves the best FID of $0.038\pm0.004$, whereas the incremental variant attains the strongest point estimates for R-Precision, MM-Dist, and CLIP score. Relative to CMDM, the strongest prior method on most metrics, the global variant reduces FID from $0.078$ to $0.038$ ($51.3\%$), while the incremental variant improves Top-1/2/3 R-Precision by $0.008/0.005/0.004$, reduces MM-Dist by $0.030$, and increases CLIP by $0.001$. Thus, we demonstrate that \method improves both fidelity and text alignment without a learned latent representation. \Tref{tab:main-snap} shows that \method also generalizes to SnapMoGen: it improves Top-1/3 R-Precision over CMDM from $.831/.958$ to $.910/.984$, while retaining a competitive FID of $16.342$ and strong multimodality of $12.538$. Complete metrics and baselines appear in Appendix~\ref{app:snap-results}.

\FloatBarrier

\subsection{Analysis of Representation-Aware Design Choices}

\newsavebox{\compactdesignbox}
\newsavebox{\compactsnapbox}

\begin{lrbox}{\compactsnapbox}
\begin{minipage}[t]{0.45\textwidth}
\centering
\captionof{table}{\textbf{Text-to-motion results on SnapMoGen.} Bold and underline mark the best and second-best generated results.}
\label{tab:main-snap}
\vspace{12pt}
\setlength{\tabcolsep}{1.9pt}
\scriptsize
\resizebox{\linewidth}{!}{%
\begin{tabular}{l|c|cccc}
\toprule
Method & Rep. & Top 1$\uparrow$ & Top 3$\uparrow$ & FID$\downarrow$ & MModality$\uparrow$ \\
\midrule
GT & -- & $.940$ & $.985$ & $.001$ & -- \\
\midrule
T2M-GPT & \multirow{4}{*}{Disc.} & $.618$ & $.812$ & $32.629$ & $9.172$ \\
MoMask & & $.777$ & $.927$ & $17.404$ & $8.183$ \\
MoMask$^{++}$ & & $.802$ & $.938$ & $\underline{15.061}$ & $7.259$ \\
ScaleMoGen & & $.807$ & $.941$ & $16.350$ & $9.399$ \\
\midrule
StableMoFusion & \multirow{4}{*}{Conti.} & $.679$ & $.888$ & $27.801$ & $9.064$ \\
MARDM & & $.648$ & $.856$ & $26.348$ & $9.883$ \\
MotionStreamer & & $.631$ & $.836$ & $30.023$ & $7.543$ \\
CMDM & & $\underline{.831}$ & $\underline{.958}$ & $\mathbf{14.451}$ & $9.521$ \\
\midrule
MDM & & $.503$ & $.727$ & $57.783$ & $\mathbf{13.412}$ \\
\rowcolor[gray]{0.90}\textbf{\method} & \multirow{-2}{*}{Raw} & $\mathbf{.910}$ & $\mathbf{.984}$ & $16.342$ & $\underline{12.538}$ \\
\bottomrule
\end{tabular}}
\end{minipage}
\end{lrbox}

\begin{lrbox}{\compactdesignbox}
\begin{minipage}[t]{0.50\textwidth}
\centering
\captionof{table}{\textbf{Representation-aware design analysis on HumanML3D.} Bold marks the better result in each pair; gray marks the proposed settings.}
\label{tab:design-analysis}
\setlength{\tabcolsep}{1.9pt}
\scriptsize
\resizebox{\linewidth}{!}{%
\begin{tabular}{lll|ccc}
\toprule
Factor & Rep. & Configuration & Top 3$\uparrow$ & FID$\downarrow$ & MM-Dist$\downarrow$ \\
\midrule
\multirow{4}{*}{Attention}
 & \multirow{2}{*}{263D} & \cellcolor[gray]{0.90}Causal, $s=5$ & \cellcolor[gray]{0.90}$\mathbf{.853}$ & \cellcolor[gray]{0.90}$\mathbf{.046}$ & \cellcolor[gray]{0.90}$\mathbf{2.890}$ \\
 & & Bidirectional, $s=5$ & $.846$ & $.067$ & $2.973$ \\
\hhline{~-----}
 & \multirow{2}{*}{XYZ} & Causal, $s=5$ & $.800$ & $1.563$ & $3.243$ \\
 & & \cellcolor[gray]{0.90}Bidirectional, $s=5$ & \cellcolor[gray]{0.90}$\mathbf{.849}$ & \cellcolor[gray]{0.90}$\mathbf{.038}$ & \cellcolor[gray]{0.90}$\mathbf{2.916}$ \\
\midrule
\multirow{4}{*}{Scale}
 & \multirow{2}{*}{263D} & Causal, $s=1$ & $\mathbf{.861}$ & $.111$ & $\mathbf{2.851}$ \\
 & & \cellcolor[gray]{0.90}Causal, $s=5$ & \cellcolor[gray]{0.90}$.853$ & \cellcolor[gray]{0.90}$\mathbf{.046}$ & \cellcolor[gray]{0.90}$2.890$ \\
\hhline{~-----}
 & \multirow{2}{*}{XYZ} & Bidirectional, $s=1$ & $.841$ & $.144$ & $2.999$ \\
 & & \cellcolor[gray]{0.90}Bidirectional, $s=5$ & \cellcolor[gray]{0.90}$\mathbf{.849}$ & \cellcolor[gray]{0.90}$\mathbf{.038}$ & \cellcolor[gray]{0.90}$\mathbf{2.916}$ \\
\midrule
\multirow{4}{*}{Target}
 & \multirow{2}{*}{263D} & \cellcolor[gray]{0.90}Causal, $x$-pred. & \cellcolor[gray]{0.90}$\mathbf{.853}$ & \cellcolor[gray]{0.90}$\mathbf{.046}$ & \cellcolor[gray]{0.90}$\mathbf{2.890}$ \\
 & & Causal, $v$-pred. & $.851$ & $.061$ & $2.900$ \\
\hhline{~-----}
 & \multirow{2}{*}{XYZ} & \cellcolor[gray]{0.90}Bidirectional, $x$-pred. & \cellcolor[gray]{0.90}$\mathbf{.849}$ & \cellcolor[gray]{0.90}$\mathbf{.038}$ & \cellcolor[gray]{0.90}$\mathbf{2.916}$ \\
 & & Bidirectional, $v$-pred. & $.831$ & $.249$ & $3.008$ \\
\bottomrule
\end{tabular}}
\end{minipage}
\end{lrbox}

\begin{table*}[t]
\centering
\usebox{\compactsnapbox}\hfill\usebox{\compactdesignbox}
\end{table*}

\paragraph{Temporal attention depends on the motion representation.}
\label{sec:attention-study}

\Tref{tab:design-analysis} (full version in Appendix~\ref{app:design-analysis}) shows that causal attention reduces 263D FID from $0.067$ to $0.046$, whereas bidirectional attention reduces XYZ FID from $1.563$ to $0.038$. This reversal supports a filtering--smoothing account: causality biases incremental features toward forward-consistent evolution, while bidirectional context helps absolute coordinates maintain whole-trajectory consistency. We interpret this as a finite-model inductive bias rather than a universal advantage of restricted context; formal risk and dynamical analyses appear in Appendix~\ref{app:attention-representation}. Appendix~\ref{app:attention-routing} analyzes the attention patterns for both models and provides attention-routing evidence for this interpretation.

\paragraph{Source scaling substantially affects FID.}
\label{sec:scale-study}

Increasing $s$ from 1 to 5 reduces FID from $0.111$ to $0.046$ for causal 263D generation, with small trade-offs in retrieval performance and MM-Dist, while substantially improving every reported metric for bidirectional XYZ generation. Proposition~\ref{prop:source-geometry} explains why: a larger $s$ improves path isotropy, thereby mitigating anisotropy in motion space by delaying signal emergence. Appendix~\ref{app:motion-correlation} verifies the underlying correlation structure of each representation and quantifies the resulting midpoint SNR and correlation retention. Overall, these results show that the source scale should be selected for the representation rather than inherited from a VAE latent space, making it an important design choice for motion-space generation.

\paragraph{Clean-motion prediction improves distributional fidelity.}
\label{sec:prediction-study}

Clean-motion prediction outperforms direct $v$-prediction on every reported metric in all four matched comparisons. FID improves from $0.061$ to $0.046$ for causal 263D generation and from $0.249$ to $0.038$ for bidirectional XYZ generation. These results show that clean-motion prediction consistently enhances distributional fidelity in high-dimensional motion spaces.

\subsection{Inference-Time Spatial Control}
\label{sec:control-evaluation}

Following the 67-dimensional setting used by ProjFlow~\citep{watanabe2026projflow}, we adopt the OmniControl~\citep{xie2024omnicontrol} protocol with six controlled joints: the pelvis, left foot, right foot, head, left wrist, and right wrist. The protocol evaluates five control densities, corresponding to 1, 2, 5, 49, and 196 keyframes, and averages each metric across these densities. We report FID, Top-3 R-Precision, diversity, foot-skating ratio, trajectory error, location error, and average control error. \Tref{tab:control_noAITS} shows that \method exactly satisfies all evaluated constraints without control-conditioned training. In the all-joints setting, it improves the zero-shot ProjFlow baseline from $0.097$ to $0.061$ FID and from $0.779$ to $0.818$ R-Precision@3. Per-joint results appear in Appendix~\ref{app:control-results}.

\begin{table*}[t]
    \centering
    \caption{\textbf{Results of text-conditioned motion generation with spatial controls on HumanML3D.} The first block trains and evaluates on pelvis controls. The second trains on all joints and reports the average over separately controlled joints; complete per-joint results appear in \Tref{tab:supp_control_full}. Bold and underline denote the best and second-best values, respectively.}
    \vspace{-0.5em}
    \renewcommand{\arraystretch}{0.92}
    \resizebox{1\linewidth}{!}{
    \begin{tabular}{clcccccccc}
    \toprule
       \multirowcell{2}{Controlling\\ Joint} & \multirow{2}{*}{Methods} & \multirow{2}{*}{Zero-shot?}& \multirow{2}{*}{FID$\downarrow$}& R-Precision & \multirow{2}{*}{Diversity$\rightarrow$} & Foot Skating& \multirow{2}{*}{Traj. err.$\downarrow$} & \multirow{2}{*}{Loc. err.$\downarrow$} & \multirow{2}{*}{Avg. err.$\downarrow$}\\
    ~ & && & Top 3 & & Ratio.$\downarrow$ & & & \\
    \midrule
    &GT &- &$0.000$ &$0.795$ &$10.455$ &-&$0.000$ &$0.000$ &$0.000$\\
    \midrule
    \multirow{8}{*}{\parbox{2.2cm}{\centering \textbf{Pelvis}}}
    &MDM~\citep{mdm2022human} &\ding{51} &$1.792$ &$0.673$ &$9.131$ &$0.1019$ &$0.4022$ &$0.3076$ &$0.5959$\\
    &PriorMDM~\citep{shafir2024human} &\ding{55} &$0.393$ &$0.707$ &$9.847$ &$0.0897$&$0.3457$ &$0.2132$ &$0.4417$\\
    &GMD~\citep{karunratanakul2023guided} &\ding{51} &$0.238$ &$0.763$ &$10.011$ &$0.1009$&$0.0931$ &$0.0321$ &$0.1439$\\
    &OmniControl~\citep{xie2024omnicontrol} &\ding{55} &$0.081$ &$0.789$ &$10.323$  &$\underline{0.0547}$ &$0.0387$ &$0.0096$ &$0.0338$\\
    &MotionLCM V2+CtrlNet~\citep{dai2024motionlcm} &\ding{55} &$3.978$ &$0.738$ &$9.249$ &$0.0901$ &$0.1080$ &$0.0581$&$0.1386$ \\
    &MaskControl~\citep{pinyoanuntapong2025maskcontrol} &\ding{55}&$\mathbf{0.066}$ &$\underline{0.799}$ &$\underline{10.474}$ &$\mathbf{0.0543}$ &$\mathbf{0.0000}$ &$\mathbf{0.0000}$& $0.0093$\\
    &ProjFlow~\citep{watanabe2026projflow} &\ding{51} &$0.107$ &$0.784$ &$10.644$ &$0.0629$ &$\mathbf{0.0000}$ &$\mathbf{0.0000}$ &$\mathbf{0.0000}$\\
    &\cellcolor[gray]{0.90}\textbf{\method} &\cellcolor[gray]{0.90}\ding{51} &\cellcolor[gray]{0.90}$\underline{0.068}$ &\cellcolor[gray]{0.90}$\mathbf{0.821}$ &\cellcolor[gray]{0.90}$\mathbf{10.447}$ &\cellcolor[gray]{0.90}$0.0591$ &\cellcolor[gray]{0.90}$\mathbf{0.0000}$ &\cellcolor[gray]{0.90}$\mathbf{0.0000}$ &\cellcolor[gray]{0.90}$\mathbf{0.0000}$\\
    \midrule

    \multirow{5}{*}{\parbox{2.2cm}{\centering \textbf{All Joints\\(Average)}}}
    &OmniControl~\citep{xie2024omnicontrol} &\ding{55} &$0.126$ &$0.792$ &$\underline{10.276}$ & $0.0608$&$0.0617$ &$0.0107$ &$0.0404$\\
    &MotionLCM V2+CtrlNet~\citep{dai2024motionlcm} &\ding{55} &$4.504$ &$0.715$ &$9.230$ &0.1119 &$0.2740$ &$0.1315$ &$0.2464$\\
    &MaskControl~\citep{pinyoanuntapong2025maskcontrol} &\ding{55}&$\underline{0.095}$ &$\underline{0.795}$ &$10.159$ &$\mathbf{0.0545}$ &$\mathbf{0.0000}$ &$\mathbf{0.0000}$& $\underline{0.0065}$\\
    &ProjFlow~\citep{watanabe2026projflow} &\ding{51} &$0.097$ &$0.779$ &$10.651$ &$0.0603$ &$\mathbf{0.0000}$ &$\mathbf{0.0000}$ &$\mathbf{0.0000}$\\
    &\cellcolor[gray]{0.90}\textbf{\method} &\cellcolor[gray]{0.90}\ding{51} &\cellcolor[gray]{0.90}$\mathbf{0.061}$ &\cellcolor[gray]{0.90}$\mathbf{0.818}$ &\cellcolor[gray]{0.90}$\mathbf{10.593}$ &\cellcolor[gray]{0.90}$\underline{0.0578}$ &\cellcolor[gray]{0.90}$\mathbf{0.0000}$ &\cellcolor[gray]{0.90}$\mathbf{0.0000}$ &\cellcolor[gray]{0.90}$\mathbf{0.0000}$\\
    
    \bottomrule
    \end{tabular}}
    \label{tab:control_noAITS}
\end{table*}

\subsection{Qualitative Results}
\label{sec:qualitative-results}

\begin{figure}[t]
    \centering
    \includegraphics[width=\textwidth]{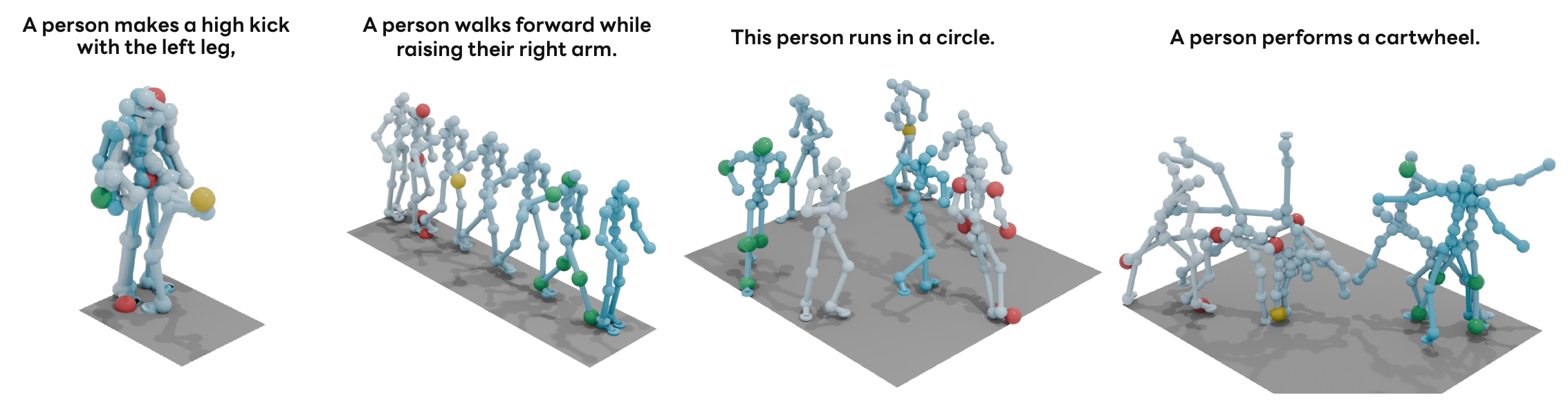}
    \caption{\textbf{Qualitative inference-time spatial control.} Colored markers indicate spatial constraints on selected joints and frames. \method follows the text prompts while satisfying these constraints.}
    \label{fig:qualitative-results}
\end{figure}

In~\Fref{fig:qualitative-results}, we show control over local limb placement, forward locomotion, circular motion, and a cartwheel. In each case, the controlled joints satisfy the constraints exactly, while the unconstrained motion remains coherent and aligned with the prompt. Appendix~\ref{app:qualitative-results} and our project website provides additional text-to-motion comparisons, and the supplementary videos visualize the generated motions.

\section{Limitations}
\label{sec:limitations}

Although \method achieves state-of-the-art text-to-motion performance and exact spatial control, several limitations remain. First, our experiments cover only HumanML3D and SnapMoGen, and whether the observed interplay among representation, source scale, and temporal attention holds across other datasets, skeletons, and motion domains remains to be evaluated. Second, operating directly on full-resolution motion preserves frame-level access but requires processing longer token sequences than temporally compressed latent models, which may limit efficiency for very long motions. Hierarchical or streaming direct-space models could improve scalability while retaining fine-grained control. Finally, our projection sampler enforces linear equality constraints on joint coordinates, but does not explicitly guarantee physical feasibility or support non-linear constraints such as collision avoidance, contact, and joint limits. Extending direct-space generation with physical priors and more expressive constraint solvers is an important direction for future work.

\section{Conclusion}
\label{sec:conclusion}

In this paper, we presented \method, a representation-aware framework for flow matching directly in continuous motion space. \method, which predicts clean motion without a learned encoder or decoder, introduces representation-aware noise scaling for anisotropic direct-space probability paths and uses RA-MMDiT to adapt temporal information flow to the motion representation. Experiments on HumanML3D and SnapMoGen demonstrate state-of-the-art text-to-motion performance and show that causal attention is favored for incremental 263D features, whereas bidirectional attention is essential for absolute XYZ coordinates. The XYZ variant further enables training-free any-joint, any-frame projection with exact constraint satisfaction.

\clearpage

\bibliography{main}
\bibliographystyle{main}

\clearpage
\appendix
\section*{Supplementary Material}

This supplementary material provides additional text-to-motion and spatial-control results, implementation details for reproducing the principal 263D and XYZ models, accompanying sample code, and theoretical and empirical analyses supporting our design choices.

\subsection*{Section Index}

\begingroup
\small
\begin{tabular*}{\linewidth}{@{\extracolsep{\fill}}lr}
\ref*{app:additional-t2m-results}\quad Additional Text-to-Motion Results & \hyperref[app:additional-t2m-results]{p.~\pageref*{app:additional-t2m-results}} \\
\quad\ref*{app:qualitative-results}\quad Qualitative Results & \hyperref[app:qualitative-results]{p.~\pageref*{app:qualitative-results}} \\
\quad\ref*{app:architecture-ablation}\quad Model Capacity and Text--Motion Attention & \hyperref[app:architecture-ablation]{p.~\pageref*{app:architecture-ablation}} \\
\quad\ref*{app:design-analysis}\quad Complete Representation-Aware Design Analysis & \hyperref[app:design-analysis]{p.~\pageref*{app:design-analysis}} \\
\quad\ref*{app:263d-results}\quad Evaluation with the Standard 263D Protocol & \hyperref[app:263d-results]{p.~\pageref*{app:263d-results}} \\
\quad\ref*{app:272d-results}\quad Training and Evaluation with the 272D MotionStreamer Protocol & \hyperref[app:272d-results]{p.~\pageref*{app:272d-results}} \\
\quad\ref*{app:snap-results}\quad Training and Evaluation on SnapMoGen & \hyperref[app:snap-results]{p.~\pageref*{app:snap-results}} \\
\quad\ref*{app:compute-efficiency}\quad Compute Efficiency & \hyperref[app:compute-efficiency]{p.~\pageref*{app:compute-efficiency}} \\
\ref*{app:control-results}\quad Detailed Spatial-Control Results & \hyperref[app:control-results]{p.~\pageref*{app:control-results}} \\
\ref*{app:repro}\quad Implementation and Reproducibility Details & \hyperref[app:repro]{p.~\pageref*{app:repro}} \\
\quad\ref*{app:config-263d}\quad Causal 263D Text-to-Motion Configuration & \hyperref[app:config-263d]{p.~\pageref*{app:config-263d}} \\
\quad\ref*{app:config-xyz}\quad Bidirectional XYZ Text-to-Motion Configuration & \hyperref[app:config-xyz]{p.~\pageref*{app:config-xyz}} \\
\quad\ref*{app:training-sampling}\quad Training and Sampling Algorithms & \hyperref[app:training-sampling]{p.~\pageref*{app:training-sampling}} \\
\quad\ref*{app:rct-control}\quad Bidirectional XYZ Endpoint-Projection Configuration & \hyperref[app:rct-control]{p.~\pageref*{app:rct-control}} \\
\ref*{app:sample-code}\quad Sample Code & \hyperref[app:sample-code]{p.~\pageref*{app:sample-code}} \\
\ref*{app:derivations}\quad Theoretical Details and Proofs & \hyperref[app:derivations]{p.~\pageref*{app:derivations}} \\
\quad\ref*{app:main-proofs}\quad Proofs of Main Propositions & \hyperref[app:main-proofs]{p.~\pageref*{app:main-proofs}} \\
\quad\ref*{app:motion-correlation}\quad Empirical Spatiotemporal Covariance and Source Scale & \hyperref[app:motion-correlation]{p.~\pageref*{app:motion-correlation}} \\
\quad\ref*{app:attention-representation}\quad Why Attention Depends on the Representation & \hyperref[app:attention-representation]{p.~\pageref*{app:attention-representation}} \\
\quad\ref*{app:attention-routing}\quad Empirical Attention Routing & \hyperref[app:attention-routing]{p.~\pageref*{app:attention-routing}} \\
\end{tabular*}
\endgroup

\section{Additional Text-to-Motion Results}
\label{app:additional-t2m-results}

\subsection{Qualitative Results}
\label{app:qualitative-results}

\Fref{fig:supp-t2m-qualitative} compares ACMDM, SALAD, and the global-XYZ and incremental-263D variants of \method on six prompts that probe action count, limb specificity, directed reaching, trajectory reversal, lateral direction, and ground-contact transitions. The red annotations identify visible baseline failures, such as repeating an action, moving the wrong limb or direction, omitting a return trajectory, and floating feet. Both \method variants more consistently preserve the highlighted prompt details while producing coherent motion. Because static trajectory overlays cannot fully convey timing and transition quality, we recommend viewing the accompanying supplementary videos on our project website.

\begin{figure*}[t]
    \centering
    \includegraphics[width=\textwidth]{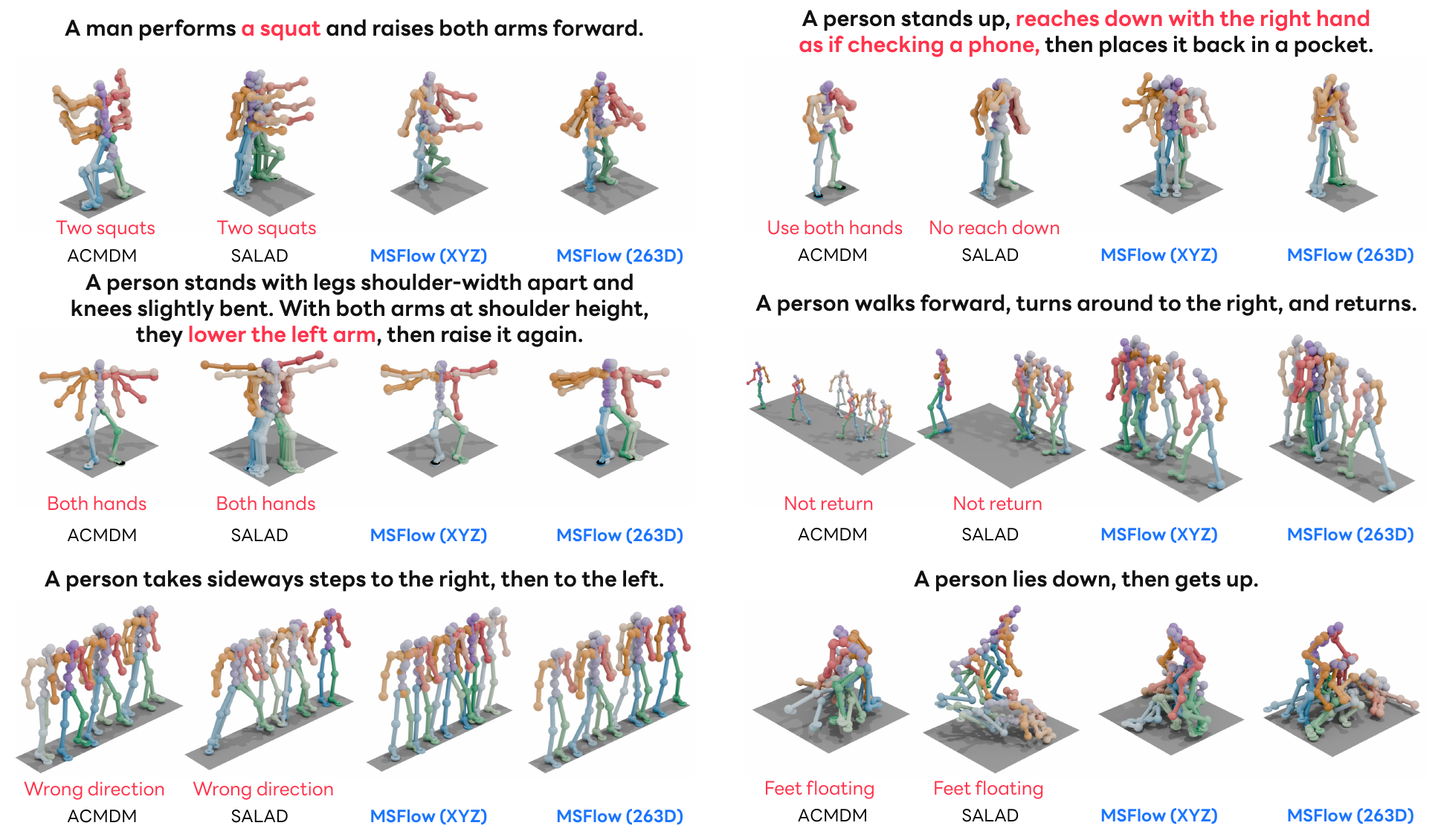}
    \caption{\textbf{Qualitative text-to-motion comparisons on HumanML3D.} We compare ACMDM, SALAD, and the global-XYZ and incremental-263D variants of \method across prompts requiring fine-grained action semantics, directional motion, and contact fidelity. Red annotations identify visible failure modes in the baselines; both \method variants more faithfully follow the highlighted prompt details. \textbf{For the clearest visualization of these dynamic behaviors, please view the accompanying supplementary videos on our project website.}}
    \label{fig:supp-t2m-qualitative}
\end{figure*}

\subsection{Model Capacity and Text--Motion Attention}
\label{app:architecture-ablation}

We ablate model capacity, the flow-time-conditioned Token Refiner, and the attention between text and motion while retaining the selected source scale, prediction target, and representation-specific temporal masks. \Tref{tab:supp-architecture-ablation} reports the complete results for both the causal 263D and bidirectional XYZ generators.

\begin{table*}[t]
\centering
\caption{\textbf{Model-capacity and text--motion attention ablations on HumanML3D.}
All models use clean-motion prediction, source scale $s=5$, classifier-free
guidance weight 3, and the representation-aware temporal mask (causal for
263D and bidirectional for XYZ). MM denotes RA-MMDiT blocks that update
motion and text through multimodal-attention; Cross denotes conventional motion
self-attention followed by text cross-attention. Token Refiner denotes the two-layer,
flow-time-conditioned module applied to text tokens. Each interval summarizes 10
stochastic evaluations. Bold denotes the best point
estimate for each metric within each representation and ablation group. Gray
cells mark the proposed setting.}
\label{tab:supp-architecture-ablation}
\setlength{\tabcolsep}{3.6pt}
\resizebox{\textwidth}{!}{%
\begin{tabular}{lcccc|ccc|ccc}
\toprule
\multicolumn{5}{c|}{Configuration} &
\multicolumn{3}{c|}{R-Precision$\uparrow$} & \multirow{2}{*}{FID$\downarrow$} & \multirow{2}{*}{MM-Dist$\downarrow$} & \multirow{2}{*}{CLIP$\uparrow$} \\
Factor & Rep. & Attention & Blocks/width & Token Refiner & Top 1 & Top 2 & Top 3 & & & \\
\midrule
\multirow{6}{*}{Capacity}
& \multirow{3}{*}{263D}
& MM & $4/256$ & \cmark & $.555^{\pm.003}$ & $.756^{\pm.003}$ & $.848^{\pm.003}$ & $.206^{\pm.008}$ & $2.940^{\pm.011}$ & $.678^{\pm.001}$ \\
& & \cellcolor[gray]{0.90}MM & \cellcolor[gray]{0.90}$8/512$ & \cellcolor[gray]{0.90}\cmark & \cellcolor[gray]{0.90}$\mathbf{.571}^{\pm.005}$ & \cellcolor[gray]{0.90}$\mathbf{.764}^{\pm.003}$ & \cellcolor[gray]{0.90}$\mathbf{.853}^{\pm.003}$ & \cellcolor[gray]{0.90}$\mathbf{.046}^{\pm.004}$ & \cellcolor[gray]{0.90}$\mathbf{2.890}^{\pm.010}$ & \cellcolor[gray]{0.90}$\mathbf{.686}^{\pm.000}$ \\
& & MM & $12/768$ & \cmark & $.561^{\pm.004}$ & $.758^{\pm.003}$ & $.847^{\pm.003}$ & $.076^{\pm.006}$ & $2.925^{\pm.012}$ & $.681^{\pm.001}$ \\
\cmidrule{2-11}
& \multirow{3}{*}{XYZ}
& MM & $4/256$ & \cmark & $.549^{\pm.004}$ & $.750^{\pm.003}$ & $.845^{\pm.003}$ & $.098^{\pm.006}$ & $3.000^{\pm.014}$ & $.658^{\pm.001}$ \\
& & \cellcolor[gray]{0.90}MM & \cellcolor[gray]{0.90}$8/512$ & \cellcolor[gray]{0.90}\cmark & \cellcolor[gray]{0.90}$\mathbf{.566}^{\pm.004}$ & \cellcolor[gray]{0.90}$.759^{\pm.003}$ & \cellcolor[gray]{0.90}$\mathbf{.849}^{\pm.003}$ & \cellcolor[gray]{0.90}$\mathbf{.038}^{\pm.004}$ & \cellcolor[gray]{0.90}$\mathbf{2.916}^{\pm.008}$ & \cellcolor[gray]{0.90}$.676^{\pm.001}$ \\
& & MM & $12/768$ & \cmark & $.564^{\pm.003}$ & $\mathbf{.760}^{\pm.003}$ & $\mathbf{.849}^{\pm.002}$ & $.039^{\pm.003}$ & $2.922^{\pm.006}$ & $\mathbf{.680}^{\pm.001}$ \\
\midrule
\multirow{8}{*}{Attention/Token Refiner}
& \multirow{4}{*}{263D}
& \cellcolor[gray]{0.90}MM & \cellcolor[gray]{0.90}$8/512$ & \cellcolor[gray]{0.90}\cmark & \cellcolor[gray]{0.90}$.571^{\pm.005}$ & \cellcolor[gray]{0.90}$.764^{\pm.003}$ & \cellcolor[gray]{0.90}$.853^{\pm.003}$ & \cellcolor[gray]{0.90}$.046^{\pm.004}$ & \cellcolor[gray]{0.90}$2.890^{\pm.010}$ & \cellcolor[gray]{0.90}$\mathbf{.686}^{\pm.000}$ \\
& & MM & $8/512$ & \xmark & $.568^{\pm.003}$ & $.761^{\pm.003}$ & $.848^{\pm.002}$ & $\mathbf{.043}^{\pm.003}$ & $2.922^{\pm.007}$ & $.679^{\pm.001}$ \\
& & Cross & $8/512$ & \cmark & $.572^{\pm.005}$ & $.762^{\pm.004}$ & $.854^{\pm.003}$ & $.044^{\pm.003}$ & $2.880^{\pm.005}$ & $\mathbf{.686}^{\pm.001}$ \\
& & Cross & $8/512$ & \xmark & $\mathbf{.581}^{\pm.005}$ & $\mathbf{.776}^{\pm.004}$ & $\mathbf{.865}^{\pm.001}$ & $.099^{\pm.006}$ & $\mathbf{2.822}^{\pm.010}$ & $.684^{\pm.001}$ \\
\cmidrule{2-11}
& \multirow{4}{*}{XYZ}
& \cellcolor[gray]{0.90}MM & \cellcolor[gray]{0.90}$8/512$ & \cellcolor[gray]{0.90}\cmark & \cellcolor[gray]{0.90}$.566^{\pm.004}$ & \cellcolor[gray]{0.90}$.759^{\pm.003}$ & \cellcolor[gray]{0.90}$.849^{\pm.003}$ & \cellcolor[gray]{0.90}$\mathbf{.038}^{\pm.004}$ & \cellcolor[gray]{0.90}$2.916^{\pm.008}$ & \cellcolor[gray]{0.90}$.676^{\pm.001}$ \\
& & MM & $8/512$ & \xmark & $.560^{\pm.004}$ & $.754^{\pm.004}$ & $.844^{\pm.002}$ & $.045^{\pm.005}$ & $2.942^{\pm.010}$ & $.675^{\pm.001}$ \\
& & Cross & $8/512$ & \cmark & $.556^{\pm.006}$ & $.756^{\pm.004}$ & $.852^{\pm.002}$ & $.079^{\pm.003}$ & $2.954^{\pm.008}$ & $\mathbf{.677}^{\pm.001}$ \\
& & Cross & $8/512$ & \xmark & $\mathbf{.568}^{\pm.003}$ & $\mathbf{.766}^{\pm.003}$ & $\mathbf{.857}^{\pm.002}$ & $.104^{\pm.004}$ & $\mathbf{2.909}^{\pm.011}$ & $.676^{\pm.001}$ \\
\bottomrule
\end{tabular}}
\end{table*}

The four-block model is consistently under-capacity: relative to the selected eight-block model, its FID increases from $.046$ to $.206$ for 263D and from $.038$ to $.098$ for XYZ, accompanied by weaker retrieval and matching scores. Scaling from eight to 12 blocks does not yield a consistent gain. The 12-block 263D model has worse FID, R-Precision, and MM-Dist, while its XYZ counterpart is statistically similar in FID ($.039\pm.003$ versus $.038\pm.004$) and only improves the CLIP point estimate. We therefore retain the eight-block, width-512 backbone as the best capacity--quality trade-off.

The component ablation reveals a fidelity--retrieval trade-off. Removing the Token Refiner from the multimodal-attention model leaves 263D FID nearly unchanged but weakens its text-alignment metrics; for XYZ it also increases FID from $.038$ to $.045$. Replacing multimodal-attention RA-MMDiT blocks with conventional motion self-attention and text cross-attention is competitive for causal 263D when the Token Refiner is retained, but raises XYZ FID from $.038$ to $.079$. Without the Token Refiner, the cross-attention models obtain the strongest R-Precision and MM-Dist point estimates, yet their FID degrades to $.099$ for 263D and $.104$ for XYZ. Thus, retrieval alone favors the simpler cross-attention variant, whereas multimodal-attention with flow-time-aware text refinement gives the most robust distributional fidelity across both motion representations and is the configuration used by our primary models.

\subsection{Complete Representation-Aware Design Analysis}
\label{app:design-analysis}

\Tref{tab:supp-design-analysis} reports all attention, source-scale, and prediction-target configurations, including Top-1/2/3 R-Precision, FID, MM-Dist, and 95\% confidence intervals. The main paper retains the comparisons needed for its discussion in compact form.

\begin{table}[t]
\centering
\caption{\textbf{Complete analysis of representation-aware design choices.} All configurations use $s=5$ and clean-motion prediction unless stated otherwise. Each interval summarizes 10 stochastic evaluations. Bold denotes the best result for each metric within each group, and gray cells mark the proposed settings.}
\label{tab:supp-design-analysis}
\setlength{\tabcolsep}{5.0pt}
\resizebox{\linewidth}{!}{
\begin{tabular}{lll|ccc|cc}
\toprule
\multirow{2}{*}{Factor} & \multirow{2}{*}{Representation} & \multirow{2}{*}{Configuration} & \multicolumn{3}{c|}{R-Precision$\uparrow$} & \multirow{2}{*}{FID$\downarrow$} & \multirow{2}{*}{MM-Dist$\downarrow$} \\
 & & & Top 1 & Top 2 & Top 3 & & \\
\midrule
\multirow{4}{*}{Attention}
 & \multirow{2}{*}{263D} & \cellcolor[gray]{0.90}Causal & \cellcolor[gray]{0.90}$\mathbf{.571}^{\pm.005}$ & \cellcolor[gray]{0.90}$\mathbf{.764}^{\pm.003}$ & \cellcolor[gray]{0.90}$\mathbf{.853}^{\pm.003}$ & \cellcolor[gray]{0.90}$\mathbf{.046}^{\pm.004}$ & \cellcolor[gray]{0.90}$\mathbf{2.890}^{\pm.010}$ \\
 & & Bidirectional & $.558^{\pm.004}$ & $.756^{\pm.003}$ & $.846^{\pm.002}$ & $.067^{\pm.002}$ & $2.973^{\pm.005}$ \\
\cmidrule{2-8}
 & \multirow{2}{*}{XYZ} & Causal & $.506^{\pm.004}$ & $.701^{\pm.002}$ & $.800^{\pm.002}$ & $1.563^{\pm.025}$ & $3.243^{\pm.008}$ \\
 & & \cellcolor[gray]{0.90}Bidirectional & \cellcolor[gray]{0.90}$\mathbf{.566}^{\pm.004}$ & \cellcolor[gray]{0.90}$\mathbf{.759}^{\pm.003}$ & \cellcolor[gray]{0.90}$\mathbf{.849}^{\pm.003}$ & \cellcolor[gray]{0.90}$\mathbf{.038}^{\pm.004}$ & \cellcolor[gray]{0.90}$\mathbf{2.916}^{\pm.008}$ \\
\midrule
\multirow{4}{*}{Source scale}
 & \multirow{2}{*}{263D} & $s=1$, causal & $\mathbf{.580}^{\pm.004}$ & $\mathbf{.775}^{\pm.003}$ & $\mathbf{.861}^{\pm.002}$ & $.111^{\pm.005}$ & $\mathbf{2.851}^{\pm.007}$ \\
 & & \cellcolor[gray]{0.90}$s=5$, causal & \cellcolor[gray]{0.90}$.571^{\pm.005}$ & \cellcolor[gray]{0.90}$.764^{\pm.003}$ & \cellcolor[gray]{0.90}$.853^{\pm.003}$ & \cellcolor[gray]{0.90}$\mathbf{.046}^{\pm.004}$ & \cellcolor[gray]{0.90}$2.890^{\pm.010}$ \\
\cmidrule{2-8}
 & \multirow{2}{*}{XYZ} & $s=1$, bidirectional & $.548^{\pm.002}$ & $.748^{\pm.003}$ & $.841^{\pm.003}$ & $.144^{\pm.009}$ & $2.999^{\pm.008}$ \\
 & & \cellcolor[gray]{0.90}$s=5$, bidirectional & \cellcolor[gray]{0.90}$\mathbf{.566}^{\pm.004}$ & \cellcolor[gray]{0.90}$\mathbf{.759}^{\pm.003}$ & \cellcolor[gray]{0.90}$\mathbf{.849}^{\pm.003}$ & \cellcolor[gray]{0.90}$\mathbf{.038}^{\pm.004}$ & \cellcolor[gray]{0.90}$\mathbf{2.916}^{\pm.008}$ \\
\midrule
\multirow{8}{*}{Prediction target}
 & \multirow{4}{*}{263D} & \cellcolor[gray]{0.90}$x$-prediction, causal & \cellcolor[gray]{0.90}$\mathbf{.571}^{\pm.005}$ & \cellcolor[gray]{0.90}$\mathbf{.764}^{\pm.003}$ & \cellcolor[gray]{0.90}$\mathbf{.853}^{\pm.003}$ & \cellcolor[gray]{0.90}$\mathbf{.046}^{\pm.004}$ & \cellcolor[gray]{0.90}$\mathbf{2.890}^{\pm.010}$ \\
 & & $v$-prediction, causal & $.567^{\pm.003}$ & $.763^{\pm.003}$ & $.851^{\pm.003}$ & $.061^{\pm.004}$ & $2.900^{\pm.011}$ \\
 & & $x$-prediction, bidirectional & $.558^{\pm.004}$ & $.756^{\pm.003}$ & $.846^{\pm.002}$ & $.067^{\pm.002}$ & $2.973^{\pm.005}$ \\
 & & $v$-prediction, bidirectional & $.551^{\pm.004}$ & $.752^{\pm.003}$ & $.842^{\pm.002}$ & $.101^{\pm.004}$ & $3.010^{\pm.009}$ \\
\cmidrule{2-8}
 & \multirow{4}{*}{XYZ} & $x$-prediction, causal & $.506^{\pm.004}$ & $.701^{\pm.002}$ & $.800^{\pm.002}$ & $1.563^{\pm.025}$ & $3.243^{\pm.008}$ \\
 & & $v$-prediction, causal & $.450^{\pm.002}$ & $.637^{\pm.004}$ & $.744^{\pm.004}$ & $4.482^{\pm.030}$ & $3.672^{\pm.012}$ \\
 & & \cellcolor[gray]{0.90}$x$-prediction, bidirectional & \cellcolor[gray]{0.90}$\mathbf{.566}^{\pm.004}$ & \cellcolor[gray]{0.90}$\mathbf{.759}^{\pm.003}$ & \cellcolor[gray]{0.90}$\mathbf{.849}^{\pm.003}$ & \cellcolor[gray]{0.90}$\mathbf{.038}^{\pm.004}$ & \cellcolor[gray]{0.90}$\mathbf{2.916}^{\pm.008}$ \\
 & & $v$-prediction, bidirectional & $.542^{\pm.004}$ & $.737^{\pm.002}$ & $.831^{\pm.003}$ & $.249^{\pm.014}$ & $3.008^{\pm.010}$ \\
\bottomrule
\end{tabular}}
\end{table}

\subsection{Evaluation with the Standard 263D Protocol}
\label{app:263d-results}

We complement the 67D evaluation in the main paper with the standard 263D HumanML3D evaluator used by MDM~\citep{mdm2022human}. \Tref{tab:supp-t2m-263d} shows that the incremental variant achieves the best Top-1 and Top-3 R-Precision scores of $0.591$ and $0.863$, respectively, and the best MM-Dist of $2.565$, while tying CMDM for the best Top-2 R-Precision of $0.778$. The global variant obtains the second-best FID of $0.050$ and the second-best MM-Dist of $2.607$. Together, these results demonstrate strong retrieval and distributional performance under both evaluator protocols.

\begin{table}[t]
\centering
\caption{\textbf{Text-to-motion results on HumanML3D using the standard 263D evaluator.} \method (263D) denotes the causal incremental-263D variant, while \method (XYZ) denotes the bidirectional global-XYZ variant; both use clean-motion prediction and $s=5$. All intervals are 95\% confidence intervals. \textbf{Bold} indicates the best result, and \underline{underline} denotes the second-best result.}
\label{tab:supp-t2m-263d}
\setlength{\tabcolsep}{2.8pt}
\resizebox{\linewidth}{!}{
\begin{tabular}{l|c|ccc|ccc}
\toprule
\multirow{2}{*}{Method} & \multirow{2}{*}{Representation} & \multicolumn{3}{c|}{R-Precision$\uparrow$} & \multirow{2}{*}{FID$\downarrow$} & \multirow{2}{*}{MM-Dist$\downarrow$} & \multirow{2}{*}{MModality$\uparrow$} \\
\cline{3-5}
~ & ~ & Top 1 & Top 2 & Top 3 & & & \\
\midrule
GT & -- & $.511^{\pm.003}$ & $.703^{\pm.003}$ & $.797^{\pm.002}$ & $.002^{\pm.000}$ & $2.974^{\pm.008}$ & -- \\
\midrule
T2M-GPT~\citep{zhang2023t2m} & \multirow{4}{*}{\shortstack{Discrete\\latent}} & $.492^{\pm.003}$ & $.679^{\pm.002}$ & $.775^{\pm.002}$ & $.141^{\pm.005}$ & $3.121^{\pm.009}$ & $1.831^{\pm.048}$ \\
MMM~\citep{pinyoanuntapong2024mmm} & & $.515^{\pm.002}$ & $.708^{\pm.002}$ & $.804^{\pm.002}$ & $.089^{\pm.005}$ & $2.926^{\pm.007}$ & $1.226^{\pm.035}$ \\
MoMask~\citep{guo2024momask} & & $.521^{\pm.002}$ & $.713^{\pm.002}$ & $.807^{\pm.002}$ & $\mathbf{.045^{\pm.002}}$ & $2.958^{\pm.008}$ & $1.241^{\pm.040}$ \\
IRG-MotionLLM~\citep{li2025irg} & & $.564^{\pm.002}$ & $.754^{\pm.002}$ & $.841^{\pm.002}$ & $.208^{\pm.003}$ & $2.628^{\pm.005}$ & -- \\
\midrule
MLD V2~\citep{chen2023executing} & \multirow{9}{*}{\shortstack{Continuous\\latent}} & $.542^{\pm.002}$ & $.735^{\pm.002}$ & $.827^{\pm.002}$ & $.078^{\pm.004}$ & $2.808^{\pm.006}$ & $1.676^{\pm.060}$ \\
MotionLCM V2~\citep{dai2024motionlcm} & & $.548^{\pm.002}$ & $.743^{\pm.002}$ & $.835^{\pm.002}$ & $.092^{\pm.003}$ & $2.760^{\pm.008}$ & $1.800^{\pm.047}$ \\
StableMoFusion~\citep{huang2024stablemofusion} & & $.553^{\pm.003}$ & $.748^{\pm.002}$ & $.841^{\pm.002}$ & $.098^{\pm.003}$ & $2.715^{\pm.006}$ & $1.774^{\pm.051}$ \\
EnergyMoGen~\citep{zhang2025energymogen} & & $.523^{\pm.003}$ & $.715^{\pm.002}$ & $.815^{\pm.002}$ & $.188^{\pm.006}$ & $2.915^{\pm.007}$ & $\underline{2.205^{\pm.041}}$ \\
SALAD~\citep{hong2025salad} & & $.581^{\pm.003}$ & $.769^{\pm.003}$ & $.857^{\pm.002}$ & $.076^{\pm.002}$ & $2.649^{\pm.009}$ & $1.751^{\pm.062}$ \\
MARDM~\citep{meng2024rethinking} & & $.517^{\pm.003}$ & $.708^{\pm.003}$ & $.805^{\pm.003}$ & $.116^{\pm.006}$ & $2.968^{\pm.010}$ & $1.923^{\pm.105}$ \\
MotionStreamer~\citep{xiao2025motionstreamer} & & $.496^{\pm.003}$ & $.695^{\pm.002}$ & $.793^{\pm.002}$ & $.201^{\pm.005}$ & $3.041^{\pm.009}$ & $1.463^{\pm.075}$ \\
FloodDiffusion~\citep{cai2026flooddiffusion} & & $.523^{\pm.002}$ & $.717^{\pm.002}$ & $.810^{\pm.003}$ & $.057^{\pm.002}$ & $2.887^{\pm.007}$ & -- \\
CMDM~\citep{yu2026causal} & & $\underline{.588^{\pm.004}}$ & $\mathbf{.778^{\pm.002}}$ & $\underline{.860^{\pm.003}}$ & $.068^{\pm.003}$ & $2.620^{\pm.010}$ & $1.785^{\pm.074}$ \\
\midrule
MDM~\citep{mdm2022human} & & $.455^{\pm.006}$ & $.645^{\pm.007}$ & $.749^{\pm.002}$ & $.489^{\pm.025}$ & $3.330^{\pm.025}$ & $\mathbf{2.290^{\pm.070}}$ \\
MotionDiffuse~\citep{zhang2022motiondiffuse} & & $.491^{\pm.001}$ & $.681^{\pm.001}$ & $.782^{\pm.001}$ & $.630^{\pm.001}$ & $3.113^{\pm.001}$ & $1.553^{\pm.042}$ \\
ReMoDiffuse~\citep{zhang2023remodiffuse} & & $.510^{\pm.005}$ & $.698^{\pm.006}$ & $.795^{\pm.004}$ & $.103^{\pm.004}$ & $2.974^{\pm.016}$ & $1.795^{\pm.043}$ \\
\rowcolor[gray]{0.90}\textbf{\method (263D)} & & $\mathbf{.591^{\pm.004}}$ & $\mathbf{.778^{\pm.004}}$ & $\mathbf{.863^{\pm.003}}$ & $.057^{\pm.003}$ & $\mathbf{2.565^{\pm.007}}$ & $.890^{\pm.048}$ \\
\rowcolor[gray]{0.90}\textbf{\method (XYZ)} & \multirow{-5}{*}{\shortstack{Raw\\motion}} & $.581^{\pm.003}$ & $\underline{.773^{\pm.003}}$ & $.858^{\pm.002}$ & $\underline{.050^{\pm.004}}$ & $\underline{2.607^{\pm.009}}$ & $1.287^{\pm.099}$ \\
\bottomrule
\end{tabular}}
\end{table}

\subsection{Training and Evaluation with the 272D MotionStreamer Protocol}
\label{app:272d-results}

To compare under the protocol introduced by MotionStreamer~\citep{xiao2025motionstreamer}, we train a new causal \method model from scratch on the 272D HumanML3D representation used by MotionStreamer, rather than converting or fine-tuning our 263D model. We use the proposed architecture with source scale $s=5$ and causal attention because the 272D representation, like the 263D representation, encodes incremental motion. We evaluate the model using the same official 272D evaluator as MotionStreamer and UMO~\citep{cong2026umo}.

\begin{table}[t]
\centering
\caption{\textbf{Text-to-motion results on HumanML3D using the MotionStreamer 272D evaluator.} We include all baselines reported by MotionStreamer and UMO. UMO converts the 201D outputs of HY-Motion and its own models to 272D for evaluation. Our causal 272D model is trained from scratch with clean-motion prediction and $s=5$. Its intervals are 95\% confidence intervals. \textbf{Bold} indicates the best generated result.}
\label{tab:supp-t2m-272d}
\setlength{\tabcolsep}{4.5pt}
\resizebox{\linewidth}{!}{
\begin{tabular}{l|c|c|ccc|c}
\toprule
\multirow{2}{*}{Method} & \multirow{2}{*}{Representation} & \multirow{2}{*}{FID$\downarrow$} & \multicolumn{3}{c|}{R-Precision$\uparrow$} & \multirow{2}{*}{MM-Dist$\downarrow$} \\
\cline{4-6}
~ & ~ & ~ & Top 1 & Top 2 & Top 3 & \\
\midrule
Real motion & -- & $.002$ & $.702$ & $.864$ & $.914$ & $15.151$ \\
\midrule
T2M-GPT~\citep{zhang2023t2m} & \multirow{4}{*}{\shortstack{Discrete\\latent}} & $12.475$ & $.606$ & $.774$ & $.838$ & $16.812$ \\
MotionGPT~\citep{jiang2023motiongpt} & & $14.375$ & $.456$ & $.598$ & $.628$ & $17.892$ \\
MoMask~\citep{guo2024momask} & & $12.232$ & $.621$ & $.784$ & $.846$ & $16.138$ \\
AttT2M~\citep{zhong2023attt2m} & & $15.428$ & $.592$ & $.765$ & $.834$ & $15.726$ \\
\midrule
MLD~\citep{chen2023executing} & \multirow{2}{*}{\shortstack{Continuous\\latent}} & $18.236$ & $.546$ & $.730$ & $.792$ & $16.638$ \\
MotionStreamer~\citep{xiao2025motionstreamer} & & $11.790$ & $.631$ & $.802$ & $.859$ & $16.081$ \\
\midrule
MDM~\citep{mdm2022human} & & $23.454$ & $.523$ & $.692$ & $.764$ & $17.423$ \\
HY-Motion~\citep{wen2025hymotion} & & $61.035$ & $.667$ & $.818$ & $.876$ & $17.530$ \\
UMO-Expert~\citep{cong2026umo} & & $17.040$ & $.763$ & $.889$ & $.931$ & $15.490$ \\
UMO-Unified~\citep{cong2026umo} & & $9.460$ & $.774$ & $.892$ & $.933$ & $15.220$ \\
\rowcolor[gray]{0.90}\textbf{\method} & \multirow{-5}{*}{\shortstack{Raw\\motion}} & $\mathbf{6.540^{\pm.120}}$ & $\mathbf{.792^{\pm.003}}$ & $\mathbf{.919^{\pm.002}}$ & $\mathbf{.955^{\pm.002}}$ & $\mathbf{13.864^{\pm.014}}$ \\
\bottomrule
\end{tabular}}
\end{table}

As shown in~\Tref{tab:supp-t2m-272d}, the new 272D \method model improves FID from $11.790$ for MotionStreamer and $9.460$ for UMO-Unified to $6.540$. It also increases Top-1/2/3 R-Precision to $0.792/0.919/0.955$ and reduces MM-Dist to $13.864$, outperforming both UMO variants on every reported metric. These comparisons use the same evaluator, although their generation spaces differ: MotionStreamer and our model are trained on 272D motion, whereas UMO generates a 201D HY-Motion representation and converts its output to 272D for evaluation.

\paragraph{Difference from the standard 263D representation.}
For $K=22$ SMPL joints, the standard HumanML3D vector contains planar root velocity ($2$D), scalar yaw velocity ($1$D), root height ($1$D), root-relative positions for the $K-1$ non-root joints ($3(K-1)$D), local velocities for all joints ($3K$D), IK-derived 6D rotations for the non-root joints ($6(K-1)$D), and four foot-contact channels, totaling $263$ dimensions. The 272D representation instead contains planar root velocity ($2$D), a 6D root angular increment, and positions, velocities, and 6D rotations for all $K$ joints ($3K+3K+6K$D), totaling $272$ dimensions. Thus, it removes explicit root height and foot-contact channels, includes the root in the position and rotation blocks, and replaces scalar yaw velocity with a 6D rotation increment. More importantly, its joint rotations come directly from the source SMPL motion rather than being recovered from positions by inverse kinematics. This retains twist information and permits direct SMPL animation without the slow, error-prone SMPLify post-processing required when only positions from the standard representation are used.

\subsection{Training and Evaluation on SnapMoGen}
\label{app:snap-results}

We additionally train an \method model from scratch on SnapMoGen~\citep{guo2025snapmogen} using its native 296D motion representation with source scale $s=10$. We evaluate the results using the same protocol used by previous works~\citep{guo2025snapmogen}.

\paragraph{Native motion representation and BVH recovery.}
Each SnapMoGen frame contains root-yaw velocity ($1$D), planar root velocity ($2$D), root height ($1$D), heading-normalized global 6D rotations ($24\times6$D), joint positions ($24\times3$D), joint velocities ($24\times3$D), and four foot-contact indicators, totaling 296 dimensions. Unlike HumanML3D's parent-relative rotations, SnapMoGen rotations are global after removing root heading. BVH recovery requires only the first 148 channels: integrating root motion recovers the trajectory, while converting 6D rotations to quaternions, restoring heading, and applying the template hierarchy yields local BVH rotations. This also differs from evaluation on HumanML3D: its 67D protocol embeds four root channels and root-relative XYZ positions for 21 non-root joints ($4+21\times3=67$), whereas the official SnapMoGen evaluator embeds the first 148 root-and-rotation channels. Our model nevertheless trains on and generates all 296 channels, with the remaining channels providing redundant learning signals.

\paragraph{Attention graph.}
HumanML3D's incremental features naturally support forward accumulation and therefore favor causal attention. In contrast, SnapMoGen uses a hybrid representation that combines root motion and global joint rotations with redundant position, velocity, and contact features.  Bidirectional attention uses context from both directions to keep these features consistent across the full motion, and it outperforms causal attention on SnapMoGen. SnapMoGen also uses longer sequences (312 versus 192 frames), amplifying the context asymmetry of a causal mask. We therefore use bidirectional attention, while viewing this design as an interaction among representation, sequence length, and source scale rather than a universal advantage.

\begin{table}[t]
\centering
\caption{\textbf{Text-to-motion results on SnapMoGen.}  Our result uses a model trained from scratch on the native 296D SnapMoGen representation with clean-motion prediction and $s=10$. All intervals are 95\% confidence intervals over 10 runs. \textbf{Bold} and \underline{underline} indicate the best and second-best generated results.}
\label{tab:supp-t2m-snap}
\setlength{\tabcolsep}{3.2pt}
\resizebox{\linewidth}{!}{
\begin{tabular}{l|c|ccc|cc}
\toprule
\multirow{2}{*}{Method} & \multirow{2}{*}{Representation} & \multicolumn{3}{c|}{R-Precision$\uparrow$} & \multirow{2}{*}{FID$\downarrow$} & \multirow{2}{*}{MModality$\uparrow$} \\
\cline{3-5}
~ & ~ & Top 1 & Top 2 & Top 3 & & \\
\midrule
GT & -- & $.940^{\pm.001}$ & $.976^{\pm.001}$ & $.985^{\pm.001}$ & $.001^{\pm.000}$ & -- \\
\midrule
T2M-GPT~\citep{zhang2023t2m} & \multirow{4}{*}{\shortstack{Discrete\\latent}} & $.618^{\pm.002}$ & $.773^{\pm.002}$ & $.812^{\pm.002}$ & $32.629^{\pm.087}$ & $9.172^{\pm.181}$ \\
MoMask~\citep{guo2024momask} & & $.777^{\pm.002}$ & $.888^{\pm.002}$ & $.927^{\pm.002}$ & $17.404^{\pm.051}$ & $8.183^{\pm.184}$ \\
MoMask$^{++}$~\citep{guo2025snapmogen} & & $.802^{\pm.001}$ & $.905^{\pm.002}$ & $.938^{\pm.001}$ & $\underline{15.061^{\pm.065}}$ & $7.259^{\pm.180}$ \\
ScaleMoGen~\citep{hwang2026scalemogen} & & $.807^{\pm.004}$ & $.908^{\pm.003}$ & $.941^{\pm.003}$ & $16.350^{\pm.086}$ & $9.399^{\pm.669}$ \\
\midrule
StableMoFusion~\citep{huang2024stablemofusion} & \multirow{4}{*}{\shortstack{Continuous\\latent}} & $.679^{\pm.002}$ & $.823^{\pm.002}$ & $.888^{\pm.002}$ & $27.801^{\pm.063}$ & $9.064^{\pm.138}$ \\
MARDM~\citep{meng2024rethinking} & & $.648^{\pm.002}$ & $.801^{\pm.002}$ & $.856^{\pm.002}$ & $26.348^{\pm.208}$ & $9.883^{\pm.147}$ \\
MotionStreamer~\citep{xiao2025motionstreamer} & & $.631^{\pm.002}$ & $.791^{\pm.002}$ & $.836^{\pm.002}$ & $30.023^{\pm.131}$ & $7.543^{\pm.195}$ \\
CMDM~\citep{yu2026causal} & & $\underline{.831^{\pm.004}}$ & $\underline{.926^{\pm.003}}$ & $\underline{.958^{\pm.002}}$ & $\mathbf{14.451^{\pm.089}}$ & $9.521^{\pm.196}$ \\
\midrule
MDM~\citep{mdm2022human} & & $.503^{\pm.002}$ & $.653^{\pm.002}$ & $.727^{\pm.002}$ & $57.783^{\pm.092}$ & $\mathbf{13.412^{\pm.231}}$ \\
\rowcolor[gray]{0.90}\textbf{\method} & \multirow{-2}{*}{\shortstack{Raw\\motion}} & $\mathbf{.910^{\pm.002}}$ & $\mathbf{.969^{\pm.002}}$ & $\mathbf{.984^{\pm.001}}$ & $16.342^{\pm.118}$ & $\underline{12.538^{\pm.591}}$ \\
\bottomrule
\end{tabular}}
\end{table}

\Tref{tab:supp-t2m-snap} shows that \method obtains the strongest text--motion alignment, improving Top-1/2/3 R-Precision from $0.831/0.926/0.958$ for CMDM to $0.910/0.969/0.984$ and exceeding ScaleMoGen's $0.807/0.908/0.941$. Its FID of $16.342$ is higher than CMDM ($14.451$) and MoMask$^{++}$ ($15.061$), but slightly lower than ScaleMoGen ($16.350$) and all other remaining baselines, including MoMask ($17.404$). \method also obtains the second-best multimodality score of $12.538$, behind MDM ($13.412$). The result therefore improves semantic retrieval and distributional fidelity over the previous SnapMoGen checkpoint while retaining strong conditional diversity.

\subsection{Compute Efficiency}
\label{app:compute-efficiency}

We profile \method and the selected prior methods under a common 196-frame generation protocol. For each method, we use its official classifier-free guidance and sampling schedule and measure the complete sampling trajectory, including decoding into motion space. We retain each release's native motion representation: MARDM generates 67D features, MotionStreamer generates 272D features, and the remaining methods generate 263D features. All runs use batch size one, FP32 with TF32 enabled, and the prompt ``a person walks forward.'' on a single NVIDIA A100-SXM4-80GB GPU. We report the median of five synchronized runs after two warm-up runs. FLOPs are measured with the PyTorch flop counter, with a multiply--add counted as two operations.

\begin{table}[t]
\centering
\caption{\textbf{Quality and compute efficiency for generating a 196-frame motion.} Parameters include the complete learned generation stack and motion encoder, where applicable, but exclude frozen text encoders. FLOPs count a multiply--add as two operations. Time is the median of five runs on one NVIDIA A100-SXM4-80GB GPU. FID and R-Top3 are point estimates from the standard 263D evaluation in~\Tref{tab:supp-t2m-263d}. \textbf{Bold} and \underline{underline} denote the best and second-best available results among the methods shown.}
\label{tab:supp-compute-efficiency}
\resizebox{0.9\columnwidth}{!}{%
\begin{tabular}{lrrrrr}
\toprule
Method & Params (M) $\downarrow$ & FLOPs (TF) $\downarrow$ & Time (s) $\downarrow$ & FID $\downarrow$ & R-Top3 $\uparrow$ \\
\midrule
MARDM~\citep{meng2024rethinking} & 309.65 & 75.477 & 9.579 & 0.116 & 0.805 \\
SALAD~\citep{hong2025salad} & \textbf{10.06} & \textbf{0.467} & \textbf{0.477} & 0.076 & 0.857 \\
MotionStreamer~\citep{xiao2025motionstreamer} & 318.51 & 1.959 & 9.023 & 0.201 & 0.793 \\
CMDM~\citep{yu2026causal} & 115.01 & \underline{0.572} & 1.809 & \underline{0.068} & \underline{0.860} \\
FloodDiffusion~\citep{cai2026flooddiffusion} & 131.22 & 7.060 & 3.537 & \textbf{0.057} & 0.810 \\
\rowcolor[gray]{0.90}
\textbf{\method} & \underline{68.77} & 1.502 & \underline{1.742} & \textbf{0.057} & \textbf{0.863} \\
\bottomrule
\end{tabular}
}
\end{table}

For model size, we count all learned modules required for motion generation, including the full motion encoder when a method uses one, but omit frozen text encoders because they perform cacheable, one-time prompt preprocessing outside the iterative generator. Their execution is likewise excluded from FLOPs and timing. MARDM uses its official trained checkpoint and the actual cached CLIP embedding because its adaptive Dopri5 solver has conditioning-dependent function evaluations. FloodDiffusion internally requires 50 latent tokens, which decode to 197 frames; we follow its length construction and crop the final decoded frame so that every reported output contains exactly 196 frames.

As shown in~\Tref{tab:supp-compute-efficiency}, SALAD is the most efficient method on all three compute measures. \method is second-smallest at 68.77M parameters and second-fastest at 1.742 seconds, while its 1.502 TFLOPs is the third-lowest after SALAD and CMDM. Using the standard 263D evaluation from~\Tref{tab:supp-t2m-263d}, it obtains the highest R-Top3 of 0.863 and ties FloodDiffusion for the best FID of 0.057. Relative to CMDM, it uses 40.2\% fewer parameters and is 3.7\% faster, while improving R-Top3 by 0.003 and FID by 0.011, although it requires more FLOPs. It is also faster and uses fewer FLOPs than MotionStreamer, FloodDiffusion, and MARDM. Thus, direct full-sequence motion generation remains competitive in model size and latency while attaining the strongest reported quality.

\section{Detailed Spatial-Control Results}
\label{app:control-results}
\begin{table}[t]
    \centering
    \caption{\textbf{Complete results for text-conditioned spatial control and upper-body editing on HumanML3D.} The first block trains and evaluates on pelvis controls. The per-joint blocks use methods trained on all joints. The final block follows the upper-body editing format of ProjFlow~\citep{watanabe2026projflow}. Bold and underline denote the best and second-best values, respectively; diversity is ranked by absolute deviation from GT, ties share the same formatting, and gray cells mark \method.}
    \renewcommand{\arraystretch}{1.0}
    \resizebox{1\linewidth}{!}{
    \begin{tabular}{clcccccccc}
    \toprule
       \multirowcell{2}{Controlling\\ Joint} & \multirow{2}{*}{Methods} & \multirow{2}{*}{Zero-shot?}& \multirow{2}{*}{FID$\downarrow$}& R-Precision & \multirow{2}{*}{Diversity$\rightarrow$} & Foot Skating& \multirow{2}{*}{Traj. err.$\downarrow$} & \multirow{2}{*}{Loc. err.$\downarrow$} & \multirow{2}{*}{Avg. err.$\downarrow$}\\
    ~ & &&& Top 3 & & Ratio.$\downarrow$ & & & \\
    \midrule
    &GT &$-$ &$0.000$ &$0.795$ &$10.455$ &-&$0.000$ &$0.000$ &$0.000$\\
    \midrule
    \multirow{6}{*}{\parbox{2.2cm}{\centering \textbf{Train\\On\\Pelvis}}}
    &MDM~\citep{mdm2022human} &\ding{51} &$1.792$ &$0.673$ &$9.131$ &$0.1019$ &$0.4022$ &$0.3076$ &$0.5959$\\
    &PriorMDM~\citep{shafir2024human} &\ding{55} &$0.393$ &$0.707$ &$9.847$ &$0.0897$&$0.3457$ &$0.2132$ &$0.4417$\\
    &GMD~\citep{karunratanakul2023guided}  &\ding{51} &$0.238$ &$0.763$ &$10.011$ &$0.1009$&$0.0931$ &$0.0321$ &$0.1439$\\
    &OmniControl~\citep{xie2024omnicontrol} &\ding{55} &$\underline{0.081}$ &$\underline{0.789}$ &$\underline{10.323}$  &$\mathbf{0.0547}$ &$\underline{0.0387}$ &$\underline{0.0096}$ &$\underline{0.0338}$\\
    &ProjFlow~\citep{watanabe2026projflow} &\ding{51} &$0.107$ &$0.784$ &$10.645$ &$0.0630$ &$\mathbf{0.0000}$ &$\mathbf{0.0000}$ &$\mathbf{0.0000}$\\
    &\cellcolor[gray]{0.90}\textbf{\method} &\cellcolor[gray]{0.90}\ding{51} &\cellcolor[gray]{0.90}$\mathbf{0.068}$ &\cellcolor[gray]{0.90}$\mathbf{0.821}$ &\cellcolor[gray]{0.90}$\mathbf{10.447}$ &\cellcolor[gray]{0.90}$\underline{0.0591}$ &\cellcolor[gray]{0.90}$\mathbf{0.0000}$ &\cellcolor[gray]{0.90}$\mathbf{0.0000}$ &\cellcolor[gray]{0.90}$\mathbf{0.0000}$\\
    \midrule
    \midrule

    \multirow{5}{*}{\textbf{Pelvis}}
    &OmniControl~\citep{xie2024omnicontrol} &\ding{55} &$0.135$ &$0.790$ &$\underline{10.314}$ &$\underline{0.0571}$ &$0.0404$ &$\underline{0.0085}$ &$0.0367$\\
    &MotionLCM V2+CtrlNet~\citep{dai2024motionlcm} &\ding{55}&$4.726$&$0.713$&$9.209$ &$0.1162$&$0.1617$ &$0.0841$ &$0.1838$\\
    &MaskControl~\citep{pinyoanuntapong2025maskcontrol} &\ding{55} &$\underline{0.087}$ &$\underline{0.795}$ &$10.168$ &$\mathbf{0.0544}$ &$\underline{0.0003}$ &$\mathbf{0.0000}$ &$\underline{0.0114}$\\
    &ProjFlow~\citep{watanabe2026projflow} &\ding{51} &$0.107$ &$0.784$ &$10.645$ &$0.0630$ &$\mathbf{0.0000}$ &$\mathbf{0.0000}$ &$\mathbf{0.0000}$\\
    &\cellcolor[gray]{0.90}\textbf{\method} &\cellcolor[gray]{0.90}\ding{51} &\cellcolor[gray]{0.90}$\mathbf{0.068}$ &\cellcolor[gray]{0.90}$\mathbf{0.821}$ &\cellcolor[gray]{0.90}$\mathbf{10.447}$ &\cellcolor[gray]{0.90}$0.0591$ &\cellcolor[gray]{0.90}$\mathbf{0.0000}$ &\cellcolor[gray]{0.90}$\mathbf{0.0000}$ &\cellcolor[gray]{0.90}$\mathbf{0.0000}$\\
    \midrule

    \multirow{5}{*}{\textbf{Left foot}}
    &OmniControl~\citep{xie2024omnicontrol} &\ding{55} &$0.093$ &$\underline{0.794}$ &$\mathbf{10.338}$ &$0.0692$&$\underline{0.0594}$ &$\underline{0.0094}$ &$0.0314$\\
    &MotionLCM V2+CtrlNet~\citep{dai2024motionlcm} &\ding{55}&$4.810$ &$0.706$ &$9.158$ &$0.1047$&$0.2607$ &$0.1229$ &$0.2304$\\
    &MaskControl~\citep{pinyoanuntapong2025maskcontrol} &\ding{55} &$\underline{0.074}$ &$0.793$ &$10.241$ &$\mathbf{0.0561}$ &$\mathbf{0.0000}$ &$\mathbf{0.0000}$ &$\underline{0.0066}$\\
    &ProjFlow~\citep{watanabe2026projflow} &\ding{51} &$0.095$ &$0.771$ &$10.644$ &$\underline{0.0609}$ &$\mathbf{0.0000}$ &$\mathbf{0.0000}$ &$\mathbf{0.0000}$\\
    &\cellcolor[gray]{0.90}\textbf{\method} &\cellcolor[gray]{0.90}\ding{51} &\cellcolor[gray]{0.90}$\mathbf{0.041}$ &\cellcolor[gray]{0.90}$\mathbf{0.817}$ &\cellcolor[gray]{0.90}$\underline{10.630}$ &\cellcolor[gray]{0.90}$0.0639$ &\cellcolor[gray]{0.90}$\mathbf{0.0000}$ &\cellcolor[gray]{0.90}$\mathbf{0.0000}$ &\cellcolor[gray]{0.90}$\mathbf{0.0000}$\\
    \midrule

    \multirow{5}{*}{\textbf{Right foot}}
    &OmniControl~\citep{xie2024omnicontrol} &\ding{55} &$0.137$ &$\underline{0.798}$ &$\underline{10.241}$ &$0.0668$&$\underline{0.0666}$ &$\underline{0.0120}$ &$0.0334$\\
    &MotionLCM V2+CtrlNet~\citep{dai2024motionlcm} &\ding{55} &$4.756$ &$0.705$ &$9.303$ &$0.1026$&$0.2459$ &$0.1127$ &$0.2278$\\
    &MaskControl~\citep{pinyoanuntapong2025maskcontrol} &\ding{55} &$\underline{0.080}$ &$0.793$ &$10.159$ &$\mathbf{0.0552}$ &$\mathbf{0.0000}$ &$\mathbf{0.0000}$ &$\underline{0.0062}$\\
    &ProjFlow~\citep{watanabe2026projflow} &\ding{51} &$0.096$ &$0.770$ &$\mathbf{10.651}$ &$\underline{0.0613}$ &$\mathbf{0.0000}$ &$\mathbf{0.0000}$ &$\mathbf{0.0000}$\\
    &\cellcolor[gray]{0.90}\textbf{\method} &\cellcolor[gray]{0.90}\ding{51} &\cellcolor[gray]{0.90}$\mathbf{0.043}$ &\cellcolor[gray]{0.90}$\mathbf{0.818}$ &\cellcolor[gray]{0.90}$10.729$ &\cellcolor[gray]{0.90}$0.0642$ &\cellcolor[gray]{0.90}$\mathbf{0.0000}$ &\cellcolor[gray]{0.90}$\mathbf{0.0000}$ &\cellcolor[gray]{0.90}$\mathbf{0.0000}$\\

    \midrule

    \multirow{5}{*}{\textbf{Head}}
    &OmniControl~\citep{xie2024omnicontrol} &\ding{55} &$0.146$ &$0.796$ &$\underline{10.239}$ &$0.0556$ &$\underline{0.0422}$ &$\underline{0.0079}$ &$0.0349$\\
    &MotionLCM V2+CtrlNet~\citep{dai2024motionlcm} &\ding{55} &$4.580$ &$0.715$ &$9.278$ &$0.1138$&$0.1971$ &$0.0977$ &$0.2136$\\
    &MaskControl~\citep{pinyoanuntapong2025maskcontrol} &\ding{55} &$\underline{0.090}$ &$\underline{0.797}$ &$10.131$ &$\underline{0.0531}$ &$\mathbf{0.0000}$ &$\mathbf{0.0000}$ &$\underline{0.0064}$\\
    &ProjFlow~\citep{watanabe2026projflow} &\ding{51} &$0.099$ &$0.788$ &$10.754$ &$0.0595$ &$\mathbf{0.0000}$ &$\mathbf{0.0000}$ &$\mathbf{0.0000}$\\
    &\cellcolor[gray]{0.90}\textbf{\method} &\cellcolor[gray]{0.90}\ding{51} &\cellcolor[gray]{0.90}$\mathbf{0.066}$ &\cellcolor[gray]{0.90}$\mathbf{0.821}$ &\cellcolor[gray]{0.90}$\mathbf{10.642}$ &\cellcolor[gray]{0.90}$\mathbf{0.0528}$ &\cellcolor[gray]{0.90}$\mathbf{0.0000}$ &\cellcolor[gray]{0.90}$\mathbf{0.0000}$ &\cellcolor[gray]{0.90}$\mathbf{0.0000}$\\
    \midrule

    \multirow{5}{*}{\textbf{Left wrist}}
    &OmniControl~\citep{xie2024omnicontrol} &\ding{55} &$0.119$ &$0.783$ &$10.217$ & $0.0562$&$\underline{0.0801}$ &$\underline{0.0134}$ &$0.0529$\\
    &MotionLCM V2+CtrlNet~\citep{dai2024motionlcm} &\ding{55} &$4.103$ &$0.726$ &$9.188$ &$0.1167$ &$0.3965$ &$0.1912$ &$0.3150$\\
    &MaskControl~\citep{pinyoanuntapong2025maskcontrol} &\ding{55} &$0.118$ &$\underline{0.797}$ &$10.153$ &$\underline{0.0546}$ &$\mathbf{0.0000}$ &$\mathbf{0.0000}$ &$\underline{0.0044}$\\
    &ProjFlow~\citep{watanabe2026projflow} &\ding{51} &$\underline{0.089}$ &$0.783$ &$\underline{10.601}$ &$0.0586$ &$\mathbf{0.0000}$ &$\mathbf{0.0000}$ &$\mathbf{0.0000}$\\
    &\cellcolor[gray]{0.90}\textbf{\method} &\cellcolor[gray]{0.90}\ding{51} &\cellcolor[gray]{0.90}$\mathbf{0.075}$ &\cellcolor[gray]{0.90}$\mathbf{0.818}$ &\cellcolor[gray]{0.90}$\mathbf{10.576}$ &\cellcolor[gray]{0.90}$\mathbf{0.0535}$ &\cellcolor[gray]{0.90}$\mathbf{0.0000}$ &\cellcolor[gray]{0.90}$\mathbf{0.0000}$ &\cellcolor[gray]{0.90}$\mathbf{0.0000}$\\
    \midrule
    
    \multirow{5}{*}{\textbf{Right wrist}}
    &OmniControl~\citep{xie2024omnicontrol} &\ding{55} &$0.128$ &$0.792$ &$\underline{10.309}$ & $0.0601$&$\underline{0.0813}$ &$\underline{0.0127}$ &$0.0519$\\
    &MotionLCM V2+CtrlNet~\citep{dai2024motionlcm} &\ding{55} &$4.051$ &$0.725$ &$9.242$ &$0.1176$ &$ 0.3822$ &$0.1806$ &$0.3079$\\
    &MaskControl~\citep{pinyoanuntapong2025maskcontrol} &\ding{55} &$0.121$ &$\underline{0.797}$ &$10.105$ &$\underline{0.0537}$ &$\mathbf{0.0000}$ &$\mathbf{0.0000}$ &$\underline{0.0044}$\\
    &ProjFlow~\citep{watanabe2026projflow} &\ding{51} &$\underline{0.096}$ &$0.780$ &$10.610$ &$0.0584$ &$\mathbf{0.0000}$ &$\mathbf{0.0000}$ &$\mathbf{0.0000}$\\
    &\cellcolor[gray]{0.90}\textbf{\method} &\cellcolor[gray]{0.90}\ding{51} &\cellcolor[gray]{0.90}$\mathbf{0.075}$ &\cellcolor[gray]{0.90}$\mathbf{0.815}$ &\cellcolor[gray]{0.90}$\mathbf{10.533}$ &\cellcolor[gray]{0.90}$\mathbf{0.0534}$ &\cellcolor[gray]{0.90}$\mathbf{0.0000}$ &\cellcolor[gray]{0.90}$\mathbf{0.0000}$ &\cellcolor[gray]{0.90}$\mathbf{0.0000}$\\

    \midrule

    \multirow{5}{*}{\textbf{Average}}
    &OmniControl~\citep{xie2024omnicontrol} &\ding{55} &$0.126$ &$0.792$ &$\underline{10.276}$ & $0.0608$&$0.0617$ &$\underline{0.0107}$ &$0.0404$\\
    &MotionLCM V2+CtrlNet~\citep{dai2024motionlcm} &\ding{55} &$4.504$ &$0.715$ &$9.230$ &0.1119 &$0.2740$ &$0.1315$ &$0.2464$\\
    &MaskControl~\citep{pinyoanuntapong2025maskcontrol} &\ding{55} &$\underline{0.095}$ &$\underline{0.795}$ &$10.159$ &$\mathbf{0.0545}$ &$\underline{0.0001}$ &$\mathbf{0.0000}$ &$\underline{0.0065}$\\
    &ProjFlow~\citep{watanabe2026projflow} &\ding{51} &$0.097$ &$0.779$ &$10.651$ &$0.0603$ &$\mathbf{0.0000}$ &$\mathbf{0.0000}$ &$\mathbf{0.0000}$\\
    &\cellcolor[gray]{0.90}\textbf{\method} &\cellcolor[gray]{0.90}\ding{51} &\cellcolor[gray]{0.90}$\mathbf{0.061}$ &\cellcolor[gray]{0.90}$\mathbf{0.818}$ &\cellcolor[gray]{0.90}$\mathbf{10.593}$ &\cellcolor[gray]{0.90}$\underline{0.0578}$ &\cellcolor[gray]{0.90}$\mathbf{0.0000}$ &\cellcolor[gray]{0.90}$\mathbf{0.0000}$ &\cellcolor[gray]{0.90}$\mathbf{0.0000}$\\

    \midrule
    \midrule
    & \multirow{2}{*}{Methods} & \multirow{2}{*}{Zero-shot?} & \multirow{2}{*}{FID$\downarrow$} & R-Precision & R-Precision & R-Precision & \multirow{2}{*}{Matching$\downarrow$} & \multirow{2}{*}{Diversity$\rightarrow$} & \multirow{2}{*}{$-$}\\
    ~ & && & Top 1 & Top 2 & Top 3 & & & \\
    \midrule
    \multirow{7}{*}{\parbox{2.2cm}{\centering \textbf{Upper-Body\\Edit}}}
    &MDM~\citep{mdm2022human} &\ding{51} &$1.918$ &$0.359$ &$0.556$ &$0.654$ &$4.793$ &$9.210$ &$-$\\
    &OmniControl~\citep{xie2024omnicontrol} &\ding{55} &$0.909$ &$0.428$ &$0.614$ &$0.722$ &$3.694$ &$10.207$ &$-$\\
    &MotionLCM V2+CtrlNet~\citep{dai2024motionlcm} &\ding{55} &$3.922$ &$0.404$ &$0.592$ &$0.692$ &$5.610$ &$9.309$ &$-$\\
    &MaskControl~\citep{pinyoanuntapong2025maskcontrol} &\ding{55} &$\underline{0.066}$ &$0.501$ &$0.695$ &$\underline{0.794}$ &$3.227$ &$10.159$ &$-$\\
    &ACMDM-S-PS22+CtrlNet~\citep{meng2025absolute} &\ding{55} &$0.076$ &$\mathbf{0.532}$ &$\mathbf{0.719}$ &$\mathbf{0.820}$ &$\mathbf{3.098}$ &$\underline{10.586}$ &$-$\\
    &ProjFlow~\citep{watanabe2026projflow} &\ding{51} &$0.087$ &$0.501$ &$0.690$ &$0.787$ &$3.319$ &$\mathbf{10.571}$ &$-$\\
    &\cellcolor[gray]{0.90}\textbf{\method} &\cellcolor[gray]{0.90}\ding{51} &\cellcolor[gray]{0.90}$\mathbf{0.034}$ &\cellcolor[gray]{0.90}$\underline{0.511}$ &\cellcolor[gray]{0.90}$\underline{0.701}$ &\cellcolor[gray]{0.90}$\underline{0.794}$ &\cellcolor[gray]{0.90}$\underline{3.214}$ &\cellcolor[gray]{0.90}$10.714$ &\cellcolor[gray]{0.90}$-$\\
    \bottomrule
    \end{tabular}}
    \label{tab:supp_control_full}
\end{table}

The main paper reports the pelvis-only and all-joints-average results in \Tref{tab:control_noAITS}. \Tref{tab:supp_control_full} gives the complete per-joint evaluation under the same OmniControl~\citep{xie2024omnicontrol} protocol. Across the pelvis, feet, head, and wrists, \method has zero trajectory, location, and average errors. Compared with ProjFlow~\citep{watanabe2026projflow}, it lowers FID and raises R-Precision@3 for every reported joint. Averaged over the all-joints setting, FID improves from $0.097$ to $0.061$ and R-Precision@3 from $0.779$ to $0.818$. The average foot-skating ratio also decreases from $0.0603$ to $0.0578$, although exact satisfaction of projected coordinates and a single physical metric do not establish the quality of the unconstrained motion.

Following MaskControl~\citep{pinyoanuntapong2025maskcontrol} and ProjFlow~\citep{watanabe2026projflow}, upper-body editing conditions on the ground-truth pelvis, left-foot, and right-foot signals at every frame and generates the remaining motion from the text description. The final block reports this task using the comparison and metric format of ProjFlow. \method achieves the best FID, reducing it from $0.066$ to $0.034$ relative to the strongest baseline, and obtains R-Precision of $0.511$, $0.701$, and $0.794$ at ranks 1--3, a matching distance of $3.214$, and diversity of $10.714$.

\section{Implementation and Reproducibility Details}
\label{app:repro}

We next specify the two principal \method variants and give the training, sampling, and endpoint-projection algorithms corresponding to the framework in \Fref{fig:framework}. Unless otherwise stated, models are trained for 500 epochs with batch size 64 using AdamW~\citep{loshchilov2019decoupled} and learning rate $2\times10^{-4}$. The 68M-parameter transformer has eight RA-MMDiT blocks of width 512, four attention heads, and feed-forward width 1024. We use text dropout $0.1$, classifier-free guidance weight 3, one sequence-level $t\sim\mathcal U(0,1)$ per training example, and $\sigma_{\min}=0.05$ for training and $0.01$ for inference. Sampling uses EMA weights, fixed-step Heun updates, and a final Euler step. A complete 500-epoch training run of \method takes approximately 450 minutes on one NVIDIA A100-SXM4-80GB GPU.

\subsection{Causal 263D Text-to-Motion Configuration}
\label{app:config-263d}

\Tref{tab:config} summarizes the configuration of the causal 263D generator.

\begin{table}[t]
\centering
\caption{Configuration of the causal 263D model used for the primary HumanML3D result. The evaluated EMA checkpoint was not retained with an immutable run and epoch identifier.}
\label{tab:config}
\begin{tabular}{ll}
\toprule
Item & Selected value \\
\midrule
Model variant & causal 263D, $s=5$ \\
Training representation & HumanML3D, 263 dimensions \\
Evaluation representation & root/RIC subset, 67 dimensions \\
Backbone & 8 joint blocks, width 512, 4 heads \\
Motion block size & 1 frame \\
Training / inference mask & causal / causal \\
Flow path / time & linear / uniform sequence-level $t$ \\
Network output / loss & endpoint $\hat\x_1$ / stabilized residual MSE \\
Source distribution & $\mathcal N(0,25\mathbf I)$ \\
Text encoder & frozen DistilBERT, token features \\
Token Refiner & 2 layers, timestep/context conditioned \\
CFG / condition dropout & 3.0 / 0.1 \\
Sampler & 50 time points, Heun + final Euler \\
Evaluated weights & latest EMA checkpoint \\
\bottomrule
\end{tabular}
\end{table}

\begin{table}[t]
\centering
\caption{Configuration of the bidirectional XYZ model used for the primary HumanML3D result.}
\label{tab:config-xyz}
\begin{tabular}{ll}
\toprule
Item & Selected value \\
\midrule
Model variant & bidirectional XYZ, $s=5$ \\
Training representation & shared-axis-normalized XYZ, 66 dimensions \\
Evaluation representation & root/RIC subset, 67 dimensions \\
Backbone & 8 joint blocks, width 512, 4 heads \\
Motion block size & 1 frame \\
Training / inference mask & bidirectional / bidirectional \\
Flow path / time & linear / uniform sequence-level $t$ \\
Network output / loss & endpoint $\hat\x_1$ / stabilized residual MSE \\
Source distribution & $\mathcal N(0,25\mathbf I)$ \\
Text encoder & frozen DistilBERT, token features \\
Token Refiner & 2 layers, timestep/context conditioned \\
CFG / condition dropout & 3.0 / 0.1 \\
Sampler & 50 time points, Heun + final Euler \\
Evaluated weights & latest EMA checkpoint labeled \\
\bottomrule
\end{tabular}
\end{table}

\subsection{Bidirectional XYZ Text-to-Motion Configuration}
\label{app:config-xyz}

\Tref{tab:config-xyz} summarizes the corresponding configuration of the bidirectional XYZ generator.

\FloatBarrier

\subsection{Training and Sampling Algorithms}
\label{app:training-sampling}

Algorithms~\ref{alg:training} and~\ref{alg:sampling} detail the shared training and text-to-motion sampling procedures, respectively.

\begin{algorithm}[t]
\caption{Representation-aware direct flow training}
\label{alg:training}
\begin{algorithmic}[1]
\Require Motion--text minibatch $(\x_1,\mathbf c)$, source scale $s$
\State Sample $\boldsymbol\epsilon\sim\mathcal N(0,\mathbf I)$ and $t\sim\mathcal U(0,1)$
\State Form $\x_0\gets s\boldsymbol\epsilon$ and $\x_t\gets(1-t)\x_0+t\x_1$
\State Refine frozen text tokens using $t$ and their masked global mean
\State Choose the fixed mask $\mathbf M_{\operatorname{Att}}$ (causal 263D or bidirectional XYZ)
\State Predict $\hat\x_1\gets f_\theta(\x_t,t,\mathbf c;\mathbf M_{\operatorname{Att}})$
\State Convert $\hat\x_1$ and $\x_1$ to stabilized residual fields with \Eref{eq:x-to-v}
\State Update $\theta$ with valid-frame residual MSE; update EMA weights
\end{algorithmic}
\end{algorithm}

\begin{algorithm}[t]
\caption{\method sampling}
\label{alg:sampling}
\begin{algorithmic}[1]
\Require Text $\mathbf c$, length $T$, source scale $s$, guidance $w$
\State Sample the full sequence $\x_0\sim\mathcal N(0,s^2\mathbf I)$
\State Encode and refine conditional and null text features
\For{successive time points $t_k,t_{k+1}$}
  \State Predict the guided endpoint with classifier-free guidance using the checkpoint mask $\mathbf M_{\operatorname{Att}}$
  \State Convert the endpoint to the residual field with \Eref{eq:x-to-v}
  \State Update all motion frames with one Heun step
\EndFor
\State Replace the final corrector with one Euler update
\State \Return generated motion features $\x_1$
\end{algorithmic}
\end{algorithm}

\begin{table}[t]
\centering
\caption{Inference configuration for bidirectional XYZ endpoint projection. The model is trained for text-to-motion generation without masks or target coordinates.}
\label{tab:rct-control-config}
\begin{tabular}{ll}
\toprule
Item & Control-evaluation setting \\
\midrule
Training attention & bidirectional, fixed for the XYZ checkpoint \\
Inference attention & bidirectional \\
Control training & none \\
Raw controls & any valid frames, selected joint IDs, and XYZ axes \\
CFG weight / projected steps & 3.0 / 100 \\
Source noise & scale $s$ of the training path \\
Noise refresh & source-scale corrected \\
\bottomrule
\end{tabular}
\end{table}

\begin{algorithm}[t]
\caption{Bidirectional zero-shot XYZ control}
\label{alg:rct-control}
\begin{algorithmic}[1]
\Require Bidirectional XYZ checkpoint, text $\mathbf c$, length $T$, target $\y$, mask $\mathbf M$
\State Resolve source scale $s$ from the checkpoint
\State Sample $\x_0\sim\mathcal N(\mathbf0,s^2\mathbf I)$
\For{successive time points $t,t'$}
  \State Predict the CFG velocity and recover $(\hat\x_1,\hat\x_0)$
  \State Project $\hat\x_1$ onto $(\mathbf M,\y)$ using \Eref{eq:control-projection}
  \State Optionally refresh $\hat\x_0$ using \Eref{eq:control-refresh}
  \State Recompose $\x_{t'}$ using \Eref{eq:control-recompose}
\EndFor
\State Hard-replace controlled coordinates and zero padded frames
\State \Return controlled direct-space motion $\x_1$
\end{algorithmic}
\end{algorithm}

\subsection{Bidirectional XYZ Endpoint-Projection Configuration}
\label{app:rct-control}

Spatial control reuses the source scale and attention graph of the trained bidirectional XYZ generator. Arbitrary binary masks select valid XYZ entries, while neither masks nor target coordinates are provided during training. \Tref{tab:rct-control-config} summarizes this inference-only configuration, and Algorithm~\ref{alg:rct-control} details the endpoint-projection sampler.

\section{Sample Code}
\label{app:sample-code}

We provide code for training and evaluating the proposed \method on the HumanML3D dataset. \textbf{\textit{Please refer to the sample code on our project website for details.}}

\section{Theoretical Details and Proofs}
\label{app:derivations}

This section provides the proofs underlying the three design questions in the main paper: the autoencoder fidelity floor in latent generation, the geometry induced by a scaled source, and the relationship between temporal attention and motion representation. For the last question, we also provide the full statistical formulation and conclude with the dynamical prefix property of the causal vector field.

\subsection{Proofs of Main Propositions}
\label{app:main-proofs}

Proposition~\ref{prop:decoder-floor} formalizes the irreducible fidelity error introduced when generated motions are restricted to a fixed decoder's range, motivating direct motion-space generation. Proposition~\ref{prop:source-geometry} explains how the Gaussian source scale controls both when motion signal emerges along the flow path and the conditioning of intermediate distributions.

\begin{proof}[Proof of Proposition~\ref{prop:decoder-floor}]
Let $\pi$ be any coupling of $\mathbf X\sim P_1$ and $\mathbf Y\sim Q$. Because $Q$ is supported on $\mathcal R_g$, $\mathbf Y\in\mathcal R_g$ almost surely. Pointwise,
\begin{equation}
 \|\mathbf X-\mathbf Y\|_2^2
 \geq \inf_{\mathbf y\in\mathcal R_g}\|\mathbf X-\mathbf y\|_2^2
 =\operatorname{dist}^2(\mathbf X,\mathcal R_g).
\end{equation}
Taking the expectation under $\pi$ and then the infimum over all couplings yields \Eref{eq:decoder-floor}.
\end{proof}

\begin{proof}[Proof of Proposition~\ref{prop:source-geometry}]
Project \Eref{eq:linear-path} onto an eigenvector $\mathbf u$: $\mathbf u^\top\mathbf X_t=t\mathbf u^\top\mathbf X_1+(1-t)s\mathbf u^\top\boldsymbol\epsilon$. Independence gives signal variance $t^2\lambda_{\mathbf u}$ and source-noise variance $(1-t)^2s^2$, proving \Eref{eq:path-snr}. The inequality $\operatorname{SNR}_{\mathbf u}(t;s)\leq1$ is equivalent on $[0,1]$ to $t\leq s/(s+\sqrt{\lambda_{\mathbf u}})$. Under uniform time sampling, the probability is the length of this interval.

Independence also gives \Eref{eq:path-covariance}. Its extreme eigenvalues are $t^2\lambda_{\min}+(1-t)^2s^2$ and $t^2\lambda_{\max}+(1-t)^2s^2$, yielding \Eref{eq:path-condition}. Let $a=t^2$ and $b=(1-t)^2s^2$. For $t<1$,
\begin{equation}
 \frac{\partial}{\partial b}
 \frac{a\lambda_{\max}+b}{a\lambda_{\min}+b}
 =\frac{a(\lambda_{\min}-\lambda_{\max})}
 {(a\lambda_{\min}+b)^2}\leq0.
\end{equation}
Since $b$ is increasing in $s>0$, $\kappa_t(s)$ is non-increasing.
\end{proof}

\subsection{Empirical Spatiotemporal Covariance and Source Scale}
\label{app:motion-correlation}

We complement Proposition~\ref{prop:source-geometry} with a full-split covariance analysis of the two training representations. For each window length $T_0\in\{32,64,128\}$, we draw one uniformly positioned window from every eligible HumanML3D training motion and flatten it to a vector $\mathbf x_n\in\mathbb R^p$, where $p=T_0D$. Each coordinate of $\mathbf x_n$ therefore identifies one feature channel $d$ at one frame $\tau$. A covariance entry indexed by $(\tau,d)$ and $(\tau',d')$ measures how those two variables vary together across motions. It includes same-frame relationships between channels, temporal relationships between frames, and cross-channel relationships across different frames; we therefore call it \emph{spatiotemporal} covariance. For XYZ, the channels are joint coordinates, while for 263D they are the channels of the HumanML3D representation.

Sampling at most one window per motion avoids overweighting long or highly correlated sequences. The filter $\max(40,T_0)\leq T<200$ yields 22,326, 20,850, and 13,328 motions for $T_0=32$, 64, and 128, respectively. The 263D windows use the official feature-wise HumanML3D normalization, while XYZ uses the shared-axis normalization used to train the global generator.

We first form the empirical covariance across the $N$ sampled motion windows. Ledoit--Wolf shrinkage then moves this estimate toward a scaled identity matrix to stabilize its spectrum when the dimension $p$ is large relative to $N$. We compare every result with an isotropic Gaussian matrix processed by the identical estimator and matched in both $N$ and $p$. For correlation matrix $\widehat{\mathbf R}$ and shrinkage-covariance eigenvalues $\lambda_k$, we report
\begin{align}
    C_{\mathrm{off}}
    &=\left(\frac{1}{p(p-1)}\sum_{i\ne j}\widehat R_{ij}^{\,2}\right)^{1/2},
    & A_{\mathrm{spec}}
    &=\frac{\lambda_{\max}}{p^{-1}\sum_k\lambda_k},\\
    \overline r_{\mathrm{eff}}
    &=\frac{1}{p}\exp\!\left(-\sum_k q_k\log q_k\right),
    & q_k&=\frac{\lambda_k}{\sum_j\lambda_j}.
    \label{eq:covariance-diagnostics}
\end{align}
Here, $C_{\mathrm{off}}$ is the root-mean-square off-diagonal correlation. It summarizes pairwise dependence between distinct frame--channel variables: zero means no linear pairwise correlation, and larger values indicate stronger dependence. $A_{\mathrm{spec}}$ divides the largest covariance eigenvalue by the mean eigenvalue. It equals one when every direction has equal variance and grows when one direction dominates the covariance spectrum.

The normalized weight $q_k$ is the fraction of total covariance variance assigned to eigen-direction $k$, so $q_k\geq0$ and $\sum_k q_k=1$. Exponentiating the entropy of these weights gives the effective number of active covariance directions; division by $p$ produces $\overline r_{\mathrm{eff}}\in(0,1]$. A value near one means variance is spread evenly across the available directions, whereas a value near zero means it is concentrated in relatively few directions. Thus, an isotropic population has $C_{\mathrm{off}}=0$, $A_{\mathrm{spec}}=1$, and $\overline r_{\mathrm{eff}}=1$. Effective rank describes the covariance spectrum and is not an estimate of the nonlinear intrinsic dimension of the motion manifold.

\begin{table}[t]
\centering
\caption{\textbf{Spatiotemporal covariance diagnostics across window lengths.}
Each statistic is reported as motion / a matched isotropic Gaussian with the
same sample count $N$, dimension $p$, and covariance estimator. Lower
$C_{\mathrm{off}}$, lower $A_{\mathrm{spec}}$, and higher
$\overline r_{\mathrm{eff}}$ indicate greater isotropy.}
\label{tab:supp-motion-correlations}
\setlength{\tabcolsep}{5pt}
\resizebox{0.7\linewidth}{!}{
\begin{tabular}{lrrrrr}
\toprule
Representation & $T_0$ & $N$ & $C_{\mathrm{off}}$ & $A_{\mathrm{spec}}$ & $\overline r_{\mathrm{eff}}$ \\
\midrule
\multirow{3}{*}{263D}
 & 32  & 22,326 & $.166/.007$ & $1088.4/1.0$ & $.009/1.000$ \\
 & 64  & 20,850 & $.148/.007$ & $2116.0/1.0$ & $.007/1.000$ \\
 & 128 & 13,328 & $.137/.009$ & $4336.5/1.0$ & $.004/1.000$ \\
\midrule
\multirow{3}{*}{XYZ}
 & 32  & 22,326 & $.485/.007$ & $1027.0/1.0$ & $.002/1.000$ \\
 & 64  & 20,850 & $.444/.007$ & $1808.7/1.0$ & $.001/1.000$ \\
 & 128 & 13,328 & $.390/.009$ & $3346.3/1.0$ & $.001/1.000$ \\
\bottomrule
\end{tabular}}
\end{table}

\begin{table}[t]
\centering
\caption{\textbf{Covariance diagnostics of pretrained latent representations.}
All methods use the same 20,850 HumanML3D training motions and one 64-frame
window per motion. Latent shape excludes the batch dimension. Each diagnostic
is reported as latent / a matched isotropic Gaussian with the same $N$, $p$,
and covariance estimator.}
\label{tab:supp-latent-correlations}
\setlength{\tabcolsep}{5pt}
\begin{tabular}{llrrr}
\toprule
Method & Latent shape & $C_{\mathrm{off}}$ & $A_{\mathrm{spec}}$ & $\overline r_{\mathrm{eff}}$ \\
\midrule
CMDM  & $16\times64$          & $.181/.007$ & $118.9/1.0$ & $.077/1.000$ \\
SALAD & $16\times7\times32$  & $.076/.007$ & $129.3/1.0$ & $.238/1.000$ \\
\bottomrule
\end{tabular}
\end{table}

\Tref{tab:supp-motion-correlations} shows substantial spatiotemporal dependence after normalization. Across all window lengths, motion has much larger residual correlation and spectral anisotropy than its matched Gaussian, together with a far smaller normalized effective rank. For example, at $T_0=64$, 263D motion has $C_{\mathrm{off}}=.148$, $A_{\mathrm{spec}}=2116.0$, and $\overline r_{\mathrm{eff}}=.007$, whereas the matched Gaussian gives $.007$, $1.0$, and $1.000$. XYZ is even more correlated and spectrally concentrated, with $.444$, $1808.7$, and $.001$, respectively. The same separation holds at 32 and 128 frames, directly verifying that marginal normalization does not make either motion space isotropic.

For comparison with latent-space generators, we apply the same estimator to the pretrained representations used by CMDM and SALAD. Both results use the common $T_0=64$ protocol and the same 20,850 motions, producing 16 latent frames for each method; we preserve each representation's native frame structure when flattening and use one seeded posterior sample per motion, matching the variables seen during generator training.

\Tref{tab:supp-latent-correlations} reports only the covariance diagnostics for CMDM and SALAD; we compare them here with the direct-motion results in \Tref{tab:supp-motion-correlations}. SALAD is the most isotropic representation by both $C_{\mathrm{off}}=.076$ and $\overline r_{\mathrm{eff}}=.238$. CMDM has slightly higher pairwise correlation than 263D ($.181$ versus $.148$), but its much smaller $A_{\mathrm{spec}}$ ($118.9$ versus $2116.0$) and larger $\overline r_{\mathrm{eff}}$ ($.077$ versus $.007$) show that variance is distributed over substantially more directions. These CMDM and SALAD results are consistent with variational regularization, although both remain far from their matched Gaussian references. Because dimensionality and preprocessing differ across representations, these statistics characterize covariance geometry rather than rank generators by sample quality. Nevertheless, they reveal clear differences between latent-space and motion-space representations.

We next substitute the extreme eigenvalues of each estimated shrinkage covariance into \Eref{eq:path-condition}. This quantifies the conditioning of the corresponding spatiotemporal probability path without Monte Carlo noise. We report $\log_{10}\kappa_t(s)$ at $T_0=64$ for an early path time $t=0.1$, the midpoint $t=0.5$, and a late path time $t=0.9$.

\begin{table}[t]
\centering
\caption{\textbf{Source scale changes path conditioning and generation
quality.} The columns headed by $t$ report $\log_{10}\kappa_t(s)$, computed at
$T_0=64$ from the extreme eigenvalues of the estimated shrinkage covariance;
lower is better. FID uses the matched clean-prediction configurations
from~\Tref{tab:design-analysis}: causal attention for 263D and bidirectional
attention for XYZ.}
\label{tab:supp-source-scale}
\setlength{\tabcolsep}{5pt}
\begin{tabular}{lrrrrr}
\toprule
Representation & Scale $s$ & $t=0.1$ & $t=0.5$ & $t=0.9$ & FID$\downarrow$ \\
\midrule
\multirow{2}{*}{263D} & 1 & $1.516$ & $3.410$ & $5.241$ & $.111^{\pm.005}$ \\
 & 5 & $\mathbf{.356}$ & $\mathbf{2.017}$ & $\mathbf{3.918}$ & $\mathbf{.046}^{\pm.004}$ \\
\midrule
\multirow{2}{*}{XYZ} & 1 & $1.253$ & $3.136$ & $5.023$ & $.144^{\pm.009}$ \\
 & 5 & $\mathbf{.224}$ & $\mathbf{1.746}$ & $\mathbf{3.646}$ & $\mathbf{.038}^{\pm.004}$ \\
\bottomrule
\end{tabular}
\end{table}

Increasing $s$ from 1 to 5 reduces $\log_{10}\kappa_t$ from $1.516/3.410/5.241$ to $.356/2.017/3.918$ at $t=0.1/0.5/0.9$ for 263D (\Tref{tab:supp-source-scale}). For XYZ, the corresponding values fall from $1.253/3.136/5.023$ to $.224/1.746/3.646$. Thus, $s$ materially changes the conditioning of the correlated motion path at early, middle, and late times; it is not merely an inference-time diversity parameter. The matched clean-prediction ablations exhibit correspondingly large sensitivity: FID falls by $58.6\%$ for causal 263D and by $73.6\%$ for bidirectional XYZ when moving from $s=1$ to $s=5$. These controlled results establish the importance of source scale for the selected motion-space generators, while the covariance analysis verifies its path-level conditioning effect. They do not imply that improved conditioning alone causes the FID gains or that $s=5$ is universally optimal: the best scale can change with the representation, attention graph, and prediction parameterization.

\subsection{Why Attention Depends on the Representation}
\label{app:attention-representation}

The attention choice reflects what a frame token represents. Incremental features describe local changes whose global effect is obtained by forward accumulation, suggesting a filtering-style causal dependency. Absolute coordinates describe a globally coupled trajectory for which future observations can help smooth an earlier state. We formalize this distinction and its limitations below.

Let $\operatorname{Att}(i)$ be the set of motion-key frames visible to the prediction at frame $i$:
\begin{equation}
    \operatorname{Att}_{\mathrm C}(i)=\{1,\ldots,i\},
    \qquad
    \operatorname{Att}_{\mathrm B}(i)=\{1,\ldots,T\}.
    \label{eq:attention-sets}
\end{equation}
The corresponding field component has the dependency
\begin{equation}
    \hat{\mathbf v}_{\theta,i}(\x_t,t,\mathbf c)
    =\hat{\mathbf v}_{\theta,i}(\x_{t,\operatorname{Att}(i)},t,\mathbf c).
    \label{eq:masked-field}
\end{equation}

To isolate the statistical cost of the causal restriction, set $Y_i=\mathbf X_{1,i}$, $\mathcal G_i=\sigma(\mathbf X_{t,1:i},t,\mathbf c)$, and $\mathcal H=\sigma(\mathbf X_{t,1:T},t,\mathbf c)$. Let $R_{\mathrm C,i}$ and $R_{\mathrm B,i}$ be the mean-squared errors of the corresponding Bayes-optimal conditional means.

\begin{proposition}[Causal excess prediction risk]
\label{prop:attention-risk}
For square-integrable motion,
\begin{equation}
    R_{\mathrm C,i}-R_{\mathrm B,i}
    =\mathbb E\left[
    \left\|
    \mathbb E[Y_i\mid\mathcal H]
    -\mathbb E[Y_i\mid\mathcal G_i]
    \right\|_2^2\right]\geq0.
    \label{eq:attention-risk}
\end{equation}
Equality holds exactly when the full-sequence conditional mean is already measurable from the noisy prefix.
\end{proposition}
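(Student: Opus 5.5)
This is the Pythagorean identity for conditional expectations under nested $\sigma$-algebras. Since $\mathbf X_{t,1:i}$ is a coordinate projection of $\mathbf X_{t,1:T}$, we have $\mathcal G_i\subseteq\mathcal H$. Write $m_{\mathcal H}=\mathbb E[Y_i\mid\mathcal H]$ and $m_{\mathcal G}=\mathbb E[Y_i\mid\mathcal G_i]$. Both are well defined and square-integrable because $Y_i\in L^2$, and both are the Bayes-optimal (MSE-minimizing) predictors in their respective classes. Hence
\[
R_{\mathrm B,i}=\mathbb E\|Y_i-m_{\mathcal H}\|_2^2,\qquad R_{\mathrm C,i}=\mathbb E\|Y_i-m_{\mathcal G}\|_2^2 .
\]

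First, decompose $Y_i-m_{\mathcal G}=(Y_i-m_{\mathcal H})+(m_{\mathcal H}-m_{\mathcal G})$ and expand the squared norm:
\[
R_{\mathrm C,i}=R_{\mathrm B,i}+\mathbb E\|m_{\mathcal H}-m_{\mathcal G}\|_2^2+2\,\mathbb E\bigl[(Y_i-m_{\mathcal H})^\top(m_{\mathcal H}-m_{\mathcal G})\bigr].
\]
Second, show that the cross term vanishes. Since $\mathcal G_i\subseteq\mathcal H$, the vector $m_{\mathcal H}-m_{\mathcal G}$ is $\mathcal H$-measurable and square-integrable. Conditioning on $\mathcal H$ and using $\mathbb E[Y_i-m_{\mathcal H}\mid\mathcal H]=0$, applied coordinatewise to the inner product, kills the cross term. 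Cauchy--Schwarz guarantees the products are integrable, so the tower property applies. This gives the identity in \Eref{eq:attention-risk}, and nonnegativity is immediate.

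Third, handle the equality case. The excess is zero iff $m_{\mathcal H}=m_{\mathcal G}$ almost surely. If this holds, $m_{\mathcal H}$ is a.s. equal to a $\mathcal G_i$-measurable variable, i.e.\ measurable from the noisy prefix up to null sets. Conversely, if $m_{\mathcal H}$ is $\mathcal G_i$-measurable (modulo completion), the tower property gives
\[
m_{\mathcal G}=\mathbb E[m_{\mathcal H}\mid\mathcal G_i]=m_{\mathcal H}.
\]

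The only delicate point is the measure-theoretic bookkeeping: checking $\mathcal G_i\subseteq\mathcal H$ (the flow time $t$ and the condition $\mathbf c$ appear in both), and interpreting "exactly when measurable" as "a.s.\ equal to a $\mathcal G_i$-measurable version." The rest is routine $L^2$ projection geometry.
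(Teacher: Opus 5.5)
Your proposal is correct and takes the same route as the paper: you add and subtract $\mathbb E[Y_i\mid\mathcal H]$ in the causal residual, eliminate the cross term because $Y_i-\mathbb E[Y_i\mid\mathcal H]$ is orthogonal to square-integrable $\mathcal H$-measurable variables, and read off the equality case from a.s.\ agreement of the two conditional means. Your extra bookkeeping (integrability via Cauchy--Schwarz, and the tower-property converse for the equality case understood up to null sets) makes explicit what the paper's proof leaves implicit.
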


\begin{proof}[Proof of Proposition~\ref{prop:attention-risk}]
Because $\mathcal G_i\subseteq\mathcal H$, conditional expectation is an orthogonal projection in $L^2$. Add and subtract $\mathbb E[Y_i\mid\mathcal H]$ inside the causal residual:
\begin{align}
Y_i-\mathbb E[Y_i\mid\mathcal G_i]
={}&Y_i-\mathbb E[Y_i\mid\mathcal H]\\
&+\mathbb E[Y_i\mid\mathcal H]-\mathbb E[Y_i\mid\mathcal G_i].
\end{align}
The first term is orthogonal to every $\mathcal H$-measurable random variable, including the second term. Taking squared norms and expectations gives
\begin{equation}
R_{\mathrm C,i}=R_{\mathrm B,i}
+\mathbb E\left\|
\mathbb E[Y_i\mid\mathcal H]-\mathbb E[Y_i\mid\mathcal G_i]
\right\|_2^2,
\end{equation}
which proves \Eref{eq:attention-risk}. Equality holds if and only if the two conditional means agree almost surely.
\end{proof}

Additional context cannot increase Bayes-optimal squared error, but it need not provide useful information. The following idealized model identifies when the inequality is strict.

\begin{proposition}[Increment prediction versus position smoothing]
\label{prop:representation-model}
Let $V_k\sim\mathcal N(0,q)$ and $E_k\sim\mathcal N(0,1)$ be mutually independent, with $t>0$ and $\tau>0$. For an increment representation observed as $Z_k=tV_k+\tau E_k$, suffix observations $Z_{i+1:T}$ do not reduce the Bayes risk for $V_i$. For an absolute-position representation $P_k=\sum_{r\leq k}V_r$ observed as $Z_k=tP_k+\tau E_k$, observing $Z_{i+1}$ strictly reduces the Bayes risk for $P_i$ whenever its prefix posterior variance is nonzero.
\end{proposition}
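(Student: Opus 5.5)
Both claims can be handled inside a jointly Gaussian model, where Bayes-optimal estimators are linear and Bayes risks are posterior variances. Proposition~\ref{prop:attention-risk} then reduces each claim to whether the bidirectional and causal conditional means coincide.

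\textbf{Increment representation.} The pairs $(V_k,Z_k)$ are mutually independent across $k$, because $V_k$ and $E_k$ are independent over all indices. Hence $(V_i,Z_{1:i})$ is independent of $Z_{i+1:T}$, and
\begin{equation*}
\mathbb E[V_i\mid Z_{1:T}]=\mathbb E[V_i\mid Z_{1:i}]=\mathbb E[V_i\mid Z_i]=\frac{tq}{t^2q+\tau^2}\,Z_i .
\end{equation*}
The excess-risk term in \Eref{eq:attention-risk} therefore vanishes, and suffix observations give no reduction. Here $t$ plays the role of the path time and $\tau$ the noise level $(1-t)s$, but only $t,\tau>0$ is used.

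\textbf{Absolute positions.} Let $\mathcal G=\sigma(Z_{1:i})$. I work with the innovation
\begin{equation*}
\tilde Z_{i+1}=Z_{i+1}-\mathbb E[Z_{i+1}\mid\mathcal G].
\end{equation*}
By Gaussianity it is independent of $\mathcal G$, and $\sigma(Z_{1:i+1})=\sigma(Z_{1:i},\tilde Z_{i+1})$. Since $Z_{i+1}=t(P_i+V_{i+1})+\tau E_{i+1}$, and $V_{i+1},E_{i+1}$ are independent of $(P_i,\mathcal G)$,
\begin{equation*}
\tilde Z_{i+1}=t\,(P_i-\hat P_i)+tV_{i+1}+\tau E_{i+1},\qquad \hat P_i=\mathbb E[P_i\mid\mathcal G].
\end{equation*}
The standard Gaussian update then gives
\begin{equation*}
\operatorname{Var}(P_i\mid Z_{1:i+1})=\Pi_i-\frac{(t\Pi_i)^2}{t^2\Pi_i+t^2q+\tau^2},
\end{equation*}
where $\Pi_i=\operatorname{Var}(P_i\mid\mathcal G)$ is a deterministic number, since Gaussian posterior variances do not depend on the data. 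The subtracted term is strictly positive whenever $\Pi_i>0$ and $t>0$, which is the claimed strict reduction. Equivalently, in Proposition~\ref{prop:attention-risk} with $\mathcal H=\sigma(Z_{1:i+1})$, the conditional means differ by
\begin{equation*}
\frac{t\Pi_i}{t^2\Pi_i+t^2q+\tau^2}\,\tilde Z_{i+1},
\end{equation*}
which has positive variance. Observing the full suffix $Z_{i+1:T}$ cannot increase the risk, by the same projection argument.

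\textbf{Main obstacle.} The delicate step is justifying that the innovation carries fresh information about $P_i$. This needs $\operatorname{Cov}(P_i,\tilde Z_{i+1})=t\Pi_i\neq0$, which relies on $V_{i+1}$ and $E_{i+1}$ being independent of $(P_i,\mathcal G)$, so their cross terms vanish. I will verify this independence explicitly. I will also note that $\Pi_i>0$ actually always holds here, because the noise $\tau>0$ prevents exact recovery of $P_i$; the stated hypothesis is therefore harmless.
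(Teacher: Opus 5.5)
Your proposal is correct and follows essentially the same route as the paper. For increments, the suffix is independent of $(V_i,Z_{1:i})$, so the excess-risk term of Proposition~\ref{prop:attention-risk} vanishes; for positions, the key fact is $\operatorname{Cov}(P_i,Z_{i+1}\mid Z_{1:i})=t\operatorname{Var}(P_i\mid Z_{1:i})\neq0$, which you simply make explicit through the innovation $\tilde Z_{i+1}$ and the closed-form Gaussian update. One small caveat: your side remark that $\Pi_i>0$ always holds requires $q>0$ (if $q=0$ then $P_i\equiv0$ and $\Pi_i=0$), so the hypothesis in the statement is not entirely vacuous.
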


\begin{proof}[Proof of Proposition~\ref{prop:representation-model}]
For the increment representation, let the noisy observation of frame $k$ be $Z_k=tV_k+\tau E_k$, where $\tau>0$ and the $E_k$ are independent standard Gaussians. Independence across $k$ gives $V_i\perp Z_{i+1:T}\mid Z_{1:i}$, hence $\mathbb E[V_i\mid Z_{1:T}]=\mathbb E[V_i\mid Z_{1:i}]$ and Proposition~\ref{prop:attention-risk} gives equal risks.

For the position representation, let $P_k=\sum_{r\leq k}V_r$ and $Z_k=tP_k+\tau E_k$. Conditional on the prefix observations, $Z_{i+1}=tP_i+tV_{i+1}+\tau E_{i+1}$. Since $V_{i+1}$ and $E_{i+1}$ are independent of $(P_i,Z_{1:i})$,
\begin{equation}
\operatorname{Cov}(P_i,Z_{i+1}\mid Z_{1:i})
=t\operatorname{Var}(P_i\mid Z_{1:i}).
\end{equation}
For $t>0$ and nonzero prefix posterior variance, this conditional covariance is nonzero. Gaussian conditioning therefore strictly reduces $\operatorname{Var}(P_i\mid Z_{1:i})$ when $Z_{i+1}$ is added. The full suffix contains $Z_{i+1}$, so bidirectional Bayes risk is strictly lower.
\end{proof}

This is the classical filtering--smoothing distinction. Importantly, Propositions~\ref{prop:attention-risk} and~\ref{prop:representation-model} do not claim that causal attention has a lower Bayes-optimal reconstruction error. Bidirectional context contains the causal context, so its optimal squared error can only be smaller; for independent increments the two risks are equal because the suffix is redundant. The empirical advantage of a causal mask must therefore come from its inductive bias in a finite model rather than from greater information. When suffix observations carry little task-relevant information, removing their edges reduces the dependency class, discourages a denoising shortcut based on symmetric temporal smoothing, and directs limited capacity toward prefix-conditioned transitions. Moreover, the flow-matching loss is a local regression objective, whereas FID and retrieval evaluate the distribution produced after repeatedly applying the learned field. A context pattern that eases framewise regression need not yield the best integrated generative dynamics.

HumanML3D is not a sequence of independent increments: it mixes root velocities, root-relative positions, rotations, local velocities, and contacts under a text condition. Its root trajectory nevertheless has a distinguished forward accumulation structure, while an individual frame already specifies a complete root-relative pose. This makes the idealized increment model a plausible explanation for why suffix motion can be redundant for important components, not a literal model of the entire representation. Conversely, absolute XYZ trajectories are temporally correlated observations of global joint positions, making their suffix genuinely informative. The proposition therefore supplies a mechanism, not a universal ranking of masks; the representation-specific preference remains an empirical question.

\subsection{Empirical Attention Routing}
\label{app:attention-routing}

We inspect eight models with a balanced $2\times2\times2$ comparison of 263D versus XYZ representations, causal versus bidirectional masks, and source scales $s\in\{1,5\}$. We use 512 HumanML3D test motions available in both representations, with identical captions and representation-specific Gaussian draws. We evaluate $t\in\{0.2,0.5,0.8\}$ and average within each motion before aggregating, so heads and layers are not treated as independent observations. Motion-direction statistics condition on the mass assigned to valid motion keys and therefore do not depend on prompt length or text-attention mass.

\begin{table*}[t]
\centering
\caption{\textbf{Midpoint attention routing for all eight selected checkpoints.} Representation specifies the motion input format, Scale is the Gaussian source scale $s$, and Mask is the temporal attention graph. The remaining columns report percentages at $t=0.5$, averaged over 512 paired test motions. Future is the fraction of motion-to-motion attention from query frame $i$ assigned to later key frames $j>i$; Local $\pm4$ is the fraction assigned within four frames of the query; and Span is the attention-weighted expected absolute offset $|j-i|$, normalized by sequence length. Text is the fraction of a motion query's total attention assigned to text keys, while Text$\rightarrow$motion is the fraction of a text query's total attention assigned to motion keys.}
\label{tab:supp-attention-routing}
\setlength{\tabcolsep}{5pt}
\resizebox{0.8\linewidth}{!}{
\begin{tabular}{lcl|rrrrr}
\toprule
Representation & Scale & Mask & Future & Local $\pm4$ & Span & Text & Text$\rightarrow$motion \\
\midrule
\multirow{4}{*}{263D} & \multirow{2}{*}{$s=1$} & Causal & $0.0$ & $43.6$ & $13.6$ & $48.9$ & $0.0$ \\
 & & Bidirectional & $47.4$ & $50.3$ & $9.7$ & $31.6$ & $76.4$ \\
 & \multirow{2}{*}{$s=5$} & Causal & $0.0$ & $46.6$ & $10.4$ & $27.2$ & $0.0$ \\
 & & Bidirectional & $49.1$ & $38.1$ & $12.7$ & $17.5$ & $86.4$ \\
\midrule
\multirow{4}{*}{XYZ} & \multirow{2}{*}{$s=1$} & Causal & $0.0$ & $24.2$ & $20.6$ & $61.6$ & $0.0$ \\
 & & Bidirectional & $49.6$ & $25.4$ & $17.5$ & $24.2$ & $86.1$ \\
 & \multirow{2}{*}{$s=5$} & Causal & $0.0$ & $24.5$ & $18.2$ & $52.0$ & $0.0$ \\
 & & Bidirectional & $51.4$ & $15.8$ & $20.9$ & $29.1$ & $76.7$ \\
\bottomrule
\end{tabular}}
\end{table*}

\begin{figure*}[t]
    \centering
    \includegraphics[width=\textwidth]{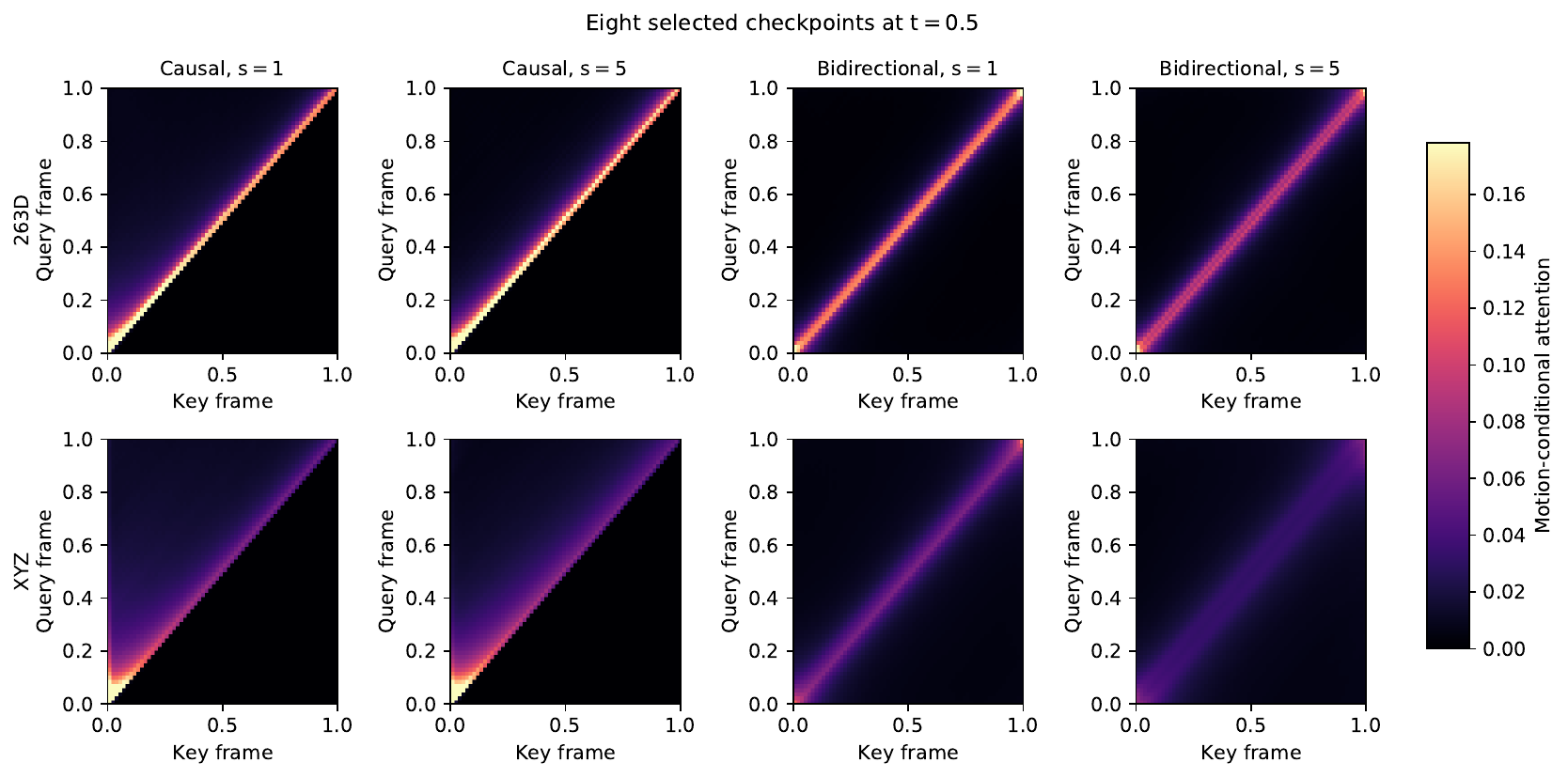}
    \caption{\textbf{Representation-dependent attention routing for all eight selected checkpoints.} Rows compare 263D and XYZ; columns compare causal and bidirectional masks at source scales $s=1$ and $s=5$. Each map averages 512 paired test motions, four heads, and eight layers at $t=0.5$ after conditioning each motion-query row on its mass assigned to valid motion keys. The causal mask removes the suffix exactly. Within the bidirectional graph, XYZ distributes attention over a broader temporal range than 263D at both source scales.}
    \label{fig:attention-routing}
\end{figure*}

\begin{figure*}[t]
    \centering
    \includegraphics[width=0.82\textwidth]{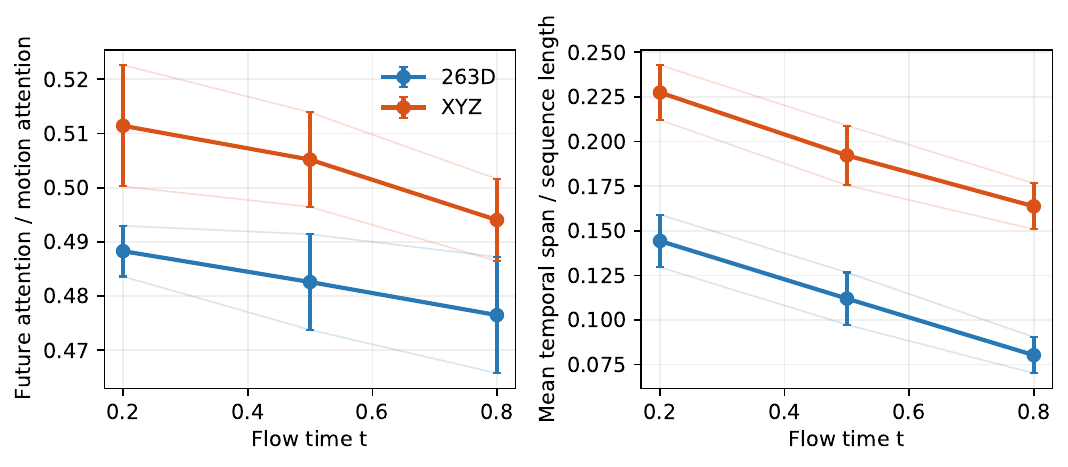}
    \caption{\textbf{Bidirectional routing across flow time for the selected checkpoints.} Each checkpoint statistic averages 512 paired test motions. Thin curves show the individual $s=1$ and $s=5$ models; markers and error bars show their mean and standard error. Future attention is the fraction of motion-to-motion attention from a query frame $i$ assigned to later key frames $j>i$; span is the attention-weighted expected absolute offset $|j-i|$, normalized by sequence length. XYZ has a longer normalized temporal span at every evaluated time, while both representations allocate approximately half of their motion attention to future frames.}
    \label{fig:attention-routing-time}
\end{figure*}

\Tref{tab:supp-attention-routing} reports the midpoint statistics for all eight checkpoints, and \Fref{fig:attention-routing} visualizes their representation-dependent routing patterns. The $s=5$ maps expose a learned difference within the same bidirectional graph: XYZ assigns only $15.8\%$ of motion attention within $\pm4$ frames, compared with $38.1\%$ for 263D, and its normalized span is $20.9\%$ rather than $12.7\%$. The independently trained $s=1$ pair shows the same ordering: XYZ is less local ($25.4\%$ versus $50.3\%$) and broader ($17.5\%$ versus $9.7\%$). Averaged over the two selected scales, bidirectional XYZ remains less local ($20.6\%$ versus $44.2\%$), has a longer span ($19.2\%$ versus $11.2\%$), and assigns a slightly larger fraction to future motion ($50.5\%$ versus $48.3\%$). As shown in~\Fref{fig:attention-routing-time}, the span ordering holds at every evaluated time: XYZ versus 263D is $22.7\%$ versus $14.4\%$ at $t=0.2$, $19.2\%$ versus $11.2\%$ at $t=0.5$, and $16.4\%$ versus $8.0\%$ at $t=0.8$.

These statistics are consistent with the proposed smoothing role of bidirectional XYZ attention: when suffix edges are available, both selected XYZ models use broader two-sided context rather than merely retaining a nominally dense mask. Conversely, all four selected causal checkpoints assign exactly zero weight to future motion and prevent text queries from reading motion, verifying the intended filtering graph and excluding an indirect suffix path through the updated text stream. Attention weights describe routing rather than causal attribution, however; values, output projections, residual paths, and subsequent layers also determine the influence on the generated motion.

Beyond the Gaussian analysis in Proposition~\ref{prop:representation-model}, Proposition~\ref{prop:prefix-autonomy} establishes a distribution-independent dynamical guarantee of the causal mask: the generated prefix is autonomous from perturbations to the source suffix.

\begin{proposition}[Prefix autonomy]
\label{prop:prefix-autonomy}
Assume the causal field is locally Lipschitz in motion state, so its sampling ODE has a unique solution. For any $k\leq T$, the generated trajectory $\x_{t,1:k}$ depends only on the source prefix $\x_{0,1:k}$ and text $\mathbf c$. Changing $\x_{0,k+1:T}$ cannot alter this prefix. Wherever the field is differentiable, its motion Jacobian is block lower triangular.
\end{proposition}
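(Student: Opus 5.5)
The plan rests on the triangular structure of the causal field in \Eref{eq:masked-field}. With $\operatorname{Att}_{\mathrm C}(i)=\{1,\ldots,i\}$, each component $\hat{\mathbf v}_{\theta,i}$ depends only on $(\x_{t,1:i},t,\mathbf c)$. For a fixed $k$, I would therefore stack the first $k$ components into a truncated field
\begin{equation*}
\mathbf F_k(\x_{1:k},t,\mathbf c)=\bigl(\hat{\mathbf v}_{\theta,1},\ldots,\hat{\mathbf v}_{\theta,k}\bigr)(\x_{1:k},t,\mathbf c),
\end{equation*}
which is well defined without reference to frames $k+1{:}T$.

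\emph{Step 1: the masking is genuinely exact.} This is the main obstacle, because one must check that no indirect path lets suffix motion influence a prefix query through the stacked blocks. I would argue by induction over layers that the hidden state of motion token $i$ at every layer is a function of motion inputs $1{:}i$, text, and $t$ only. The relevant facts are:
\begin{itemize}[leftmargin=*,nosep]
\item Text queries attend only to text tokens, so the text stream never reads motion. The Token Refiner likewise depends only on the text and $t$.
\item Motion query $i$ reads keys and values from motion tokens $1{:}i$ and from text.
\item Frame-wise projections, rotary embeddings, adaptive layer normalization, and feed-forward layers all act per token.
\item Padding is masked.
\end{itemize}
Classifier-free guidance combines two such causal predictions, so the guided endpoint inherits the property. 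The conversion \Eref{eq:x-to-v} is frame-wise with a scalar $d_t$, so it preserves the property as well.

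\emph{Step 2: autonomy of the prefix.} The full ODE $d\x_t/dt=\hat{\mathbf v}_\theta$ restricted to frames $1{:}k$ reads $d\x_{t,1:k}/dt=\mathbf F_k(\x_{t,1:k},t,\mathbf c)$. This is a closed system. $\mathbf F_k$ inherits local Lipschitz continuity, so Picard--Lindel\"of gives a unique solution determined by $\x_{0,1:k}$ and $\mathbf c$. Now take two sources that agree on frames $1{:}k$ and differ on the suffix. The prefixes of their full trajectories both solve the same closed system with the same initial value, so they coincide on the common interval of existence. The same argument applies verbatim to the discrete Heun and Euler updates: each stage evaluates the field at states whose prefix, by induction over stages, depends only on the prefix.

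\emph{Step 3: the Jacobian.} Wherever the field is differentiable, $\partial\hat{\mathbf v}_{\theta,i}/\partial\x_{t,j}=0$ for $j>i$, because $\hat{\mathbf v}_{\theta,i}$ is constant in $\x_{t,j}$ by Step 1. With frame-sized blocks, the motion Jacobian is therefore block lower triangular. As a corollary, the variational equation $\dot J=(\partial\hat{\mathbf v}/\partial\x)J$ with initial condition $J(0)=\mathbf I$ keeps the flow map's Jacobian $\partial\x_t/\partial\x_0$ block lower triangular, consistent with Step 2.
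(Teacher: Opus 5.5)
Your proposal is correct and follows the paper's proof. Causality makes the first $k$ sampling equations a closed subsystem $d\x_{t,1:k}/dt=\hat{\mathbf v}_{\theta,1:k}(\x_{t,1:k},t,\mathbf c)$ with initial value $\x_{0,1:k}$, uniqueness under local Lipschitz continuity forces sources that share a prefix to produce the same prefix trajectory, and \Eref{eq:masked-field} gives $\partial\hat{\mathbf v}_{\theta,i}/\partial\x_{t,j}=\mathbf 0$ for $j>i$. Your layer-by-layer check of the mask and your extension to the discrete Heun/Euler updates (which needs no Lipschitz assumption) are sound additions; the paper simply takes the dependency in \Eref{eq:masked-field} as the definition of the causal field.
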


\begin{proof}[Proof of Proposition~\ref{prop:prefix-autonomy}]
For any $k$, causality implies that the first $k$ sampling equations can be written as the closed subsystem
\begin{equation}
 \frac{d\x_{t,1:k}}{dt}
 =\hat{\mathbf v}_{\theta,1:k}(\x_{t,1:k},t,\mathbf c).
\end{equation}
Its initial condition is $\x_{0,1:k}$. Local Lipschitz continuity gives uniqueness, so two full-sequence initial states with the same prefix must induce the same prefix trajectory even if their suffixes differ. If the field is differentiable, \Eref{eq:masked-field} with $\operatorname{Att}=\operatorname{Att}_{\mathrm C}$ gives
$\partial\hat{\mathbf v}_{\theta,i}/\partial\x_{t,j}=\mathbf0$ for $j>i$, which is exactly block lower triangularity.
\end{proof}

Causality thus constrains information flow without making generation autoregressive: all $T$ frames are initialized together and updated at every ODE step. The lower-triangular dependency prevents a suffix error or source perturbation from feeding back into an earlier prefix through subsequent solver evaluations. This structural guarantee does not by itself prove smaller numerical error or better samples, but it removes one route for global error propagation and preserves the forward accumulation semantics of incremental root motion. Bidirectional attention gives up prefix autonomy in exchange for full-sequence smoothing and two-sided propagation of spatial constraints, which is beneficial when coordinates are absolute and globally coupled. These complementary properties motivate the attention variants evaluated in~\Sref{sec:attention-study}.

\end{document}